
\documentclass{article}

\usepackage{microtype}
\usepackage{graphicx}
\usepackage{booktabs} 

\usepackage{hyperref}



\usepackage[accepted]{icml2023}

\usepackage{amsmath}
\usepackage{amssymb}
\usepackage{mathtools}
\usepackage{amsthm}
\usepackage{multirow}

\usepackage{xspace}
\definecolor{mine}{RGB}{255, 230, 163}

\usepackage[capitalize,noabbrev]{cleveref}

\newcommand{\E}{\mathbb{E}}
\newcommand{\J}{\mathcal{J}}


\theoremstyle{plain}
\newtheorem{theorem}{Theorem}[section]

\newtheorem{lemma}[theorem]{Lemma}

\theoremstyle{definition}

\theoremstyle{remark}

\usepackage[textsize=tiny]{todonotes}

\DeclareMathOperator*{\argmin}{arg\,min}

\usepackage{xspace}
\usepackage{subfig}
\usepackage{bm}
\usepackage{wrapfig}
\usepackage{tikz}
\usetikzlibrary{bayesnet}
\usetikzlibrary{arrows}
\usetikzlibrary{backgrounds}
\usepackage{dsfont}

\icmltitlerunning{Understanding Hindsight Goal Relabeling}

\begin{document}

\twocolumn[
\icmltitle{Understanding Hindsight Goal Relabeling \\
           from a Divergence Minimization Perspective}



\icmlsetsymbol{equal}{*}

\begin{icmlauthorlist}
\icmlauthor{Lunjun Zhang}{uoft,vector,waabi}
\icmlauthor{Bradly C. Stadie}{northwestern}
\end{icmlauthorlist}

\icmlaffiliation{uoft}{Department of Computer Science, University of Toronto}
\icmlaffiliation{vector}{Vector Institute}
\icmlaffiliation{waabi}{Waabi}
\icmlaffiliation{northwestern}{Department of Statistics, Northwestern University}

\icmlcorrespondingauthor{Lunjun Zhang}{lunjun@cs.toronto.edu}

\icmlkeywords{reinforcement learning, imitation learning, goal reaching, reward design}

\vskip 0.3in
]



\printAffiliationsAndNotice{}  

\begin{abstract}
Hindsight goal relabeling has become a foundational technique in multi-goal reinforcement learning (RL). The essential idea is that any trajectory can be seen as a sub-optimal demonstration for reaching its final state. Intuitively, learning from those arbitrary demonstrations can be seen as a form of imitation learning (IL). However, the connection between hindsight goal relabeling and imitation learning is not well understood. In this paper, we propose a novel framework to understand hindsight goal relabeling from a divergence minimization perspective. Recasting the goal reaching problem in the IL framework not only allows us to derive several existing methods from first principles, but also provides us with the tools from IL to improve goal reaching algorithms. Experimentally, we find that under hindsight relabeling, $Q$-learning outperforms behaviour cloning (BC). Yet, a vanilla combination of both hurts performance. Concretely, we see that the BC loss only helps when selectively applied to actions that get the agent closer to the goal according to the $Q$-function. Our framework also explains the puzzling phenomenon wherein a reward of $\{-1, 0\}$ results in significantly better performance than a $\{0, 1\}$ reward for goal reaching.

\end{abstract}

\section{Introduction}

Goal reaching is an essential aspect of intelligence in sequential decision making. Unlike the conventional formulation of reinforcement learning (RL), which aims to encode all desired behaviors into a single scalar reward function that is amenable to learning \citep{reward-is-enough}, goal reaching formulates the problem of RL as applying a sequence of actions to \textit{rearrange} the environment into a desired state \citep{rearrangement}. Goal-reaching is a highly flexible formulation. For instance, we can design the goal-space to capture salient information about specific factors of variations \citep{multi-goal-rl}; we can use natural language instructions to define more abstract goals \citep{lynch2020language, say-can}; we can encourage exploration by prioritizing previously unseen goals \citep{skew-fit, discern, mega}; and we can even use self-supervised procedures to naturally learn goal-reaching policies without reward engineering \citep{tdm, rig, world-model-graph, openai2021asymmetric, actionablemodels}. 

Imitation learning (IL) aims to recover an expert policy from a set of expert demonstrations. The simplest imitation learning algorithm is Behaviour Cloning (BC), which directly uses the given demonstrations to supervise the policy actions conditioned on the states visited by the expert \citep{alvinn}. An alternative approach, inverse reinforcement learning (IRL), first learns a reward function from the demonstrations, and then runs online RL on the learned reward to extract the policy \citep{apprenticeship}. A modern example of IRL is generative adversarial imitation learning (GAIL) \citep{gail}, which learns a discriminator \citep{gan} as the reward while jointly running policy gradient algorithms. It has been shown that the majority of imitation learning methods can be unified under the divergence minimization framework \citep{il-divergence, f-gail, ke2021imitation}. 

\begin{figure*}[h]
    \centering
    \includegraphics[width=0.7\textwidth]{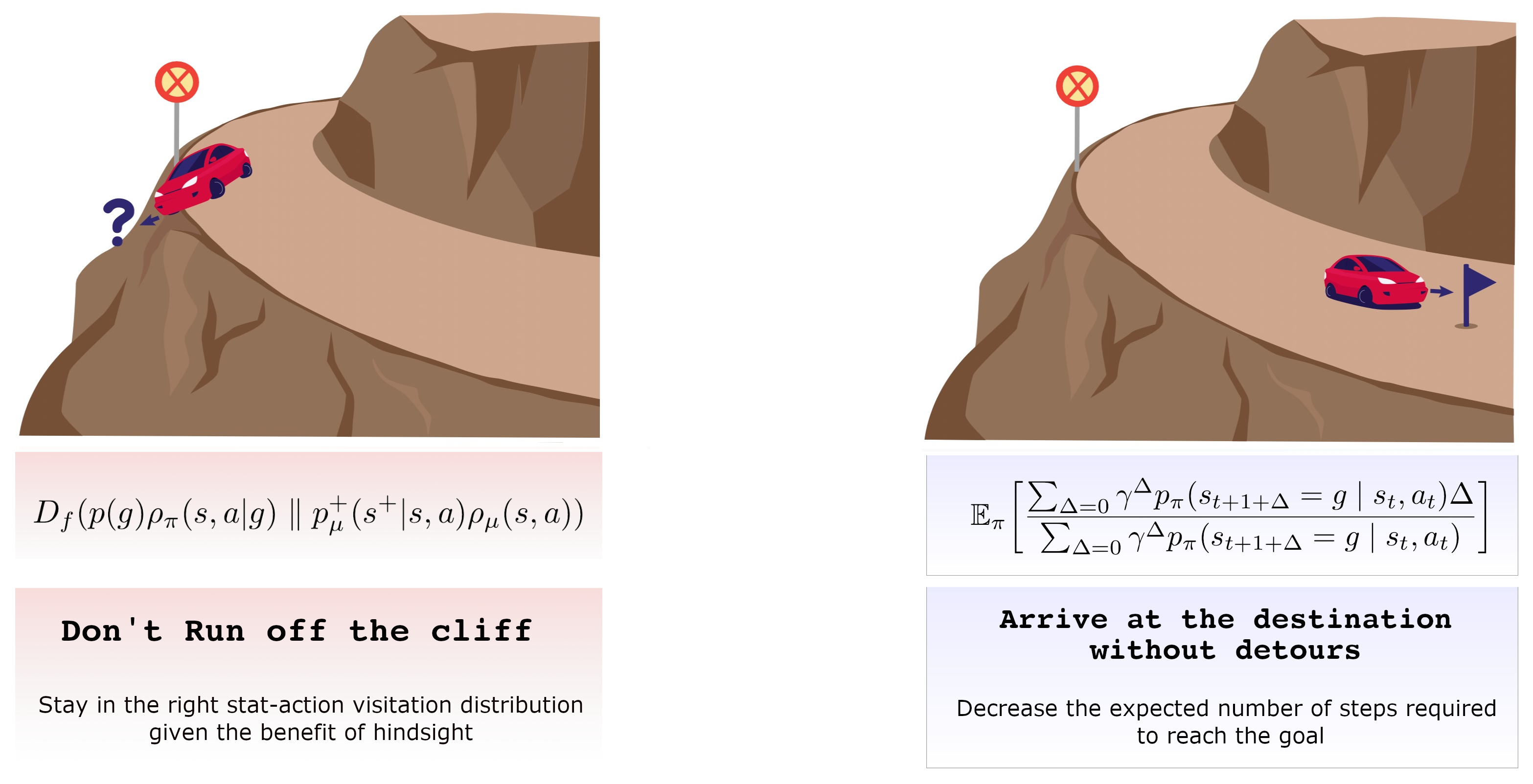}
    \caption{\footnotesize Our proposed framework for goal reaching that elucidates how rewards should be designed in multi-goal RL.}
    \label{fig:objective-anology}
    \vspace{-3mm}
\end{figure*}

Hindsight goal relabeling is a technique proposed in \cite{her} to improve the sample efficiency of goal reaching and has since been widely used \citep{gcsl, latent-plan-from-play, actionablemodels}. A goal-conditioned policy typically sets a behavioral goal and then tries to reach it during sampling, but most attempts will likely fail. Nevertheless, any trajectory has successfully reached all the states it visits along the way. Therefore, the agent may pretend post hoc that whatever states it reaches are the intended goals of that trajectory, and learn from its own success. Intuitively, hindsight relabeling creates self-generated expert demonstrations, which the policy then imitates. Can we mathematically describe hindsight relabeling and goal reaching as an imitation learning process from a divergence minimization perspective? 

In this paper, we propose a novel framework that describes hindsight goal relabeling in the language of divergence minimization, further bridging the two paradigms of goal reaching and imitation learning. Compared to prior attempts to formulate a theoretical framework for goal reaching \citep{hindsight-em, rewriting-history-with-inverse-rl, learning-successor-states, c-learning, outcome-driven}, our work pays substantially more attention to reward design in goal reaching and its connection to inverse RL, and having a probabilistically well-defined objective in the case of continuous state space. In addition, our framework can re-derive several existing methods from first principles, unlike prior frameworks that are incompatible with existing methods and thus unable to explain the success of hindsight-relabeling based goal reaching. For experiments, we find that despite a resurgence of interests in BC-based goal reaching \citep{gcsl, hbc, latent-plan-from-play, bc-zero}, multi-goal $Q$-learning can still outperform BC, and adding BC loss in multi-goal $Q$-learning hurts performance due to the sub-optimality of self-generated demonstrations. Indeed, BC only helps when selectively applied to actions that get the agent closer to the goal according to the $Q$-function. Finally, our framework also provides an interesting explanation for why the rewards of $-1$ and $0$ experimentally outperforms the rewards of $0$ and $1$ in multi-goal $Q$-learning. 

\section{Goal Reaching and Imitation Learning}

\begin{figure*}[!htb]
\captionsetup{justification=centering}
\begin{minipage}{0.33\textwidth}
    \centering
    \begin{tikzpicture}[scale = 0.4]
     \node[obs] (s) {$s$}; %
     \node[obs, right=of s] (a) {$a$}; %
     \node[obs, above=of a] (g) {$s^{+}$}; %
     \node[latent, left=of s] (env) {$\xi$};
     \node[latent, above=of env] (mu) {$\mu$}; %
     \plate [inner sep=.15cm, yshift=.15cm] {plate1} {(s)(a)(g)(mu)(env)} {}; %
     \edge {mu} {s, a}
     \edge {mu} {g}
     \edge {s, a} {g}
     \edge {s} {a}
     \edge {env} {s, g}
    \end{tikzpicture} 
    \caption{ $\rho_{\mu}(s, a) p^{+}_{\mu} (s^{+} \mid s, a)$ \\ hindsight-relabeled distribution}
    \label{target-distribution}
\end{minipage}
    \begin{minipage}{0.33\textwidth}
    \centering
    \tikz{
     \node[obs] (s) {$s$}; %
     \node[obs, right=of s] (a) {$a$}; %
     \node[obs, above=of a] (g) {$g$}; %
     \node[obs, above=of s] (pi) {$\pi$}; %
     \node[latent, left=of s] (env) {$\xi$};
     \node[latent, left=of pi] (mu) {$\mu$};
     \plate [inner sep=.15cm, yshift=.15cm] {plate1} {(s)(a)(g)(pi)(env)(mu)} {}; %
     \edge {pi} {a}
     \edge {g} {pi}
     \edge {s} {a}
     \edge {env} {s, g}
     \edge {mu} {s}
    }
    \caption{$\rho^{+}(g) \rho_{\mu}(s) \pi(a\mid s, g)$ \\
    (used in HBC / GCSL)
    }
    \label{hbc-distribution}
\end{minipage}
\begin{minipage}{0.33\textwidth}
\centering
\tikz{
 \node[obs] (s) {$s$}; %
 \node[obs, right=of s] (a) {$a$}; %
 \node[obs, above=of a] (g) {$g$}; %
 \node[obs, above=of s] (pi) {$\pi$}; %
 \node[latent, left=of s] (env) {$\xi$}; 
 \node[latent, above=of env] (mu) {$\mu$}; %
 \plate [inner sep=.15cm, yshift=.15cm] {plate1} {(s)(a)(g)(pi)(env)} {}; %
 \edge {pi} {s}
 \edge {pi} {a}
 \edge {g} {pi}
 \edge {s} {a}
 \edge {env} {s, g}
}
\caption{$\rho^{+}(g) \rho_{\pi}(s, a\mid g)$ \\ 
(used in HER / HDM)
}
 \label{her-distribution}
\end{minipage}
\end{figure*}

\subsection{Goal-Conditioned Reinforcement Learning (GCRL)}
We first review the basics of RL. A Markov Decision Process (MDP) is typically parameterized by ($\mathcal{S}$, $\mathcal{A}$, $\rho^{0}$, $p$, $r$): a state space $\mathcal{S}$, an action space $\mathcal{A}$, an initial state distribution $\rho^{0}(s)$, a dynamics function $p(s'\mid s,a)$ which defines the transition probability, and a reward function $r(s,a)$. A policy function $\mu(a\mid s)$ defines a probability distribution $\mu: \mathcal{S} \times \mathcal{A} \rightarrow \mathbb{R}^{+}$. For an infinite-horizon MDP, given the policy $\mu$, and the state distribution at step $t$ (starting from $\rho^{0}$ at $t=0$), the state distribution at step $t+1$ is given by:
\begin{align}
    \rho^{t+1}_{\mu} (s') = \int_{\mathcal{S}\times\mathcal{A}} p(s' \mid s, a) \mu(a \mid s) \rho^{t}_{\mu} (s) \mathrm{d} s \mathrm{d} a \label{state-transition}
\end{align}
The \textit{state visitation distribution} sums over all timesteps via a geometric distribution $\text{Geom} (\gamma)$:
\begin{align}
    \rho_{\mu}(s) = (1 - \gamma) \cdot \sum_{t=0}^{\infty} \gamma^{t}\cdot \rho^{t}_{\mu}(s) \label{state-distribution-aggregate}
\end{align}
However, the trajectory sampling process does not happen in this discounted manner, so the discount factor $\gamma \in (0,1)$ is often absorbed into the cumulative return instead \citep{deterministic-policy-gradient}:
\begin{equation}
\begin{aligned}
    \J(\mu) &= \dfrac{1}{1-\gamma} \int_{\mathcal{S}} \rho_{\mu}(s) \int_{\mathcal{A}} \mu(a\mid s)r(s,a) \mathrm{d}a \mathrm{d}s \\
    & = \E_{ \substack{\rho^{0}(s_{0}) \mu(a_{0}\mid s_{0}) \\ p(s_{1}\mid s_{0}, a_{0}) \mu(a_{1}\mid s_{1}) \cdots } } \Bigg[\sum_{t=0}^{\infty} \gamma^{t} r(s_t, a_t)\Bigg]
\end{aligned}
\end{equation}
From \ref{state-transition} and \ref{state-distribution-aggregate}, we also see that the future state distribution $p^{+}(s^{+}\mid s,a)$ of policy $\mu$, defined as a geometric sum of state distribution at all future timesteps given current state and action, is given by the following recursive relationship \citep{c-learning, gamma-models}:
\begin{equation}
\begin{aligned}
    &p^{+}_{\mu} (s^{+} \mid s, a) \\
    = &(1-\gamma) p(s^{+} \mid s, a) + \gamma \E_{ \substack{p(s' \mid s, a) \\ \mu(a' \mid s')} } [p^{+}_{\mu} (s^{+} \mid s', a')] \label{eq:future-state-distribution}
\end{aligned}
\end{equation}

In multi-goal RL, an MDP is augmented with a goal space $\mathcal{G}$, and we learn a goal-conditioned policy $\pi: \mathcal{S}\times \mathcal{G} \times \mathcal{A} \rightarrow \mathbb{R}^{+}$. Hindsight Experience Replay (HER) \citep{her} gives the agent a reward of $0$ when the goal is reached and $-1$ otherwise, typically determined via an epsilon ball,
\begin{align}
    r(s,a,s',s^{+}) = - \mathds{1} (\lVert s'-s^{+} \rVert > \epsilon) \label{eq:her-reward}
\end{align}
and uses \textit{hindsight goal relabeling} to increase learning efficiency of goal-conditioned $Q$-learning by replacing the initial \textit{behavioral goals} with \textit{achieved goals} (future states within the same trajectory). 

\subsection{Imitation Learning (IL) as Divergence Minimization}

We review the concept of $f$-divergence between two probability distributions $P$ and $Q$ and its variational bound:
\begin{equation}
\begin{aligned}
    D_{f}(P \parallel Q) &= \int_{X} q(x) f\bigg( \dfrac{p(x)}{q(x)} \bigg) \mathrm{d} x \\
    &\geq \sup_{T\in \mathcal{T}} \E_{x\sim P}[T(x)] - \E_{x\sim Q}[f^{*}(T(x))] \label{eq:f-divergence}
\end{aligned}
\end{equation}
where $f$ is a convex function such that $f(1)=0$, $f^{*}$ is the convex conjugate of $f$, and $T$ is an arbitrary class of functions $T: X \rightarrow \mathbb{R}$. This variational bound was originally derived in \citep{convex-risk} and was popularized by GAN \citep{gan, fgan} and subsequently by imitation learning \citep{gail, airl, finn2016connection, il-divergence}. The equality holds true under mild conditions \citep{convex-risk}, and the optimal $T$ is given by $T^{*}(x) = f'( {p(x)} / {q(x)})$. The canonical formulation of imitation learning follows \citep{gail, il-divergence}, where $\rho^{\text{exp}}(s,a)$ is from the expert:
\begin{equation}
\begin{aligned}
    &\argmin_{\mu} D_{f}(\rho^{\text{exp}}(s,a) \parallel \rho^{\mu}(s,a)) \\
    = &\argmin_{\mu} \max_{T} \E_{\rho^{\text{exp}}} [T(s,a)] - \E_{\rho^{\mu}}[f^{*}(T(s,a))] \label{gail-derivation}
\end{aligned}
\end{equation}
Because of the \textit{policy gradient theorem} \citep{policy-gradient-methods}, the policy $\mu$ needs to optimize the \textit{cumulative} return under its own trajectory distribution $\rho^{\mu}(s,a)$ with the reward being $r(s,a)=f^{*}(T(s,a))$. Under this formulation, Jensen-Shannon divergence leads to GAIL \citep{gail}, reverse KL leads to AIRL \citep{airl}. Note that $f$ can in principle be any convex function (we can satisfy $f(1)=0$ by simply adding a constant).

\section{Graphical Models for Hindsight Goal Relabeling}

Consider an environment $\xi=(\rho^{0}(s), p(s'\mid s,a), p(g))$. From $\xi$, we generate a dataset of trajectories $\mathcal{D}=\{(s_{0}, a_{0}, s_{1}, a_{1}, \cdots)\}$ by first sampling from the initial state distribution $\rho_{0}(s)$, and then executing an unobserved actor policy $\mu(a\mid s)$. After executing this action, transitions occur according to a dynamics model $p(s'\mid s,a)$. We aim to train a goal-conditioned policy $\pi(a\mid s,g)$ from this arbitrary dataset with relabeled future states as goals. $p(g)$ is the behavioral goal distribution assumed to be given \textit{apriori} by the environment. To recast the problem of goal-reaching as imitation learning \eqref{gail-derivation}, we need to set up an $f$-divergence minimization where we define the target (expert) distribution and the policy distribution we want to match. 

In multi-goal RL, the training signal comes from factorizing the joint distribution of state-action-goal differently. For the relabeled target distribution, we assume an unconditioned actor $\mu$ generating a state-action distribution at first, with the goals coming from the future state distribution \eqref{eq:future-state-distribution} conditioned on the given state and action. For the goal-conditioned policy distribution, behavior goals are given \textit{apriori}, and the state-action distribution is generated conditioned on the behavioral goals. Thus, the \textit{target distribution} (see Figure \ref{target-distribution}) for states, actions, and \textit{hindsight goals} is:
\begin{align}
    p_{\mu}(s, a, s^{+}) = \rho_{\mu}(s, a) p^{+}_{\mu} (s^{+} \mid s, a) 
\end{align}
Concretely, the target distribution is given by the unconditioned state-action visitation distribution multiplied by the conditional likelihood of future states. Note that $p_{\mu}^{+}$ is given by equation \eqref{eq:future-state-distribution}, and $\rho_{\mu}(s, a)$ is similar to $\rho^{\text{exp}}(s,a)$ in equation \eqref{gail-derivation}.
In the fashion of behavioral cloning (BC), if we do not care about matching the states, we can write the joint distribution we are trying to match as (see Figure \ref{hbc-distribution}):
\begin{align}
    p_{\pi}^{\text{BC}}(s, a, g) = {p(g)}  \rho_{\mu}(s) \pi(a\mid s, g) 
\end{align}
We can recover the objective of Hindsight Behavior Cloning (HBC) \citep{hbc, rewriting-history-with-inverse-rl} and Goal-Conditioned Supervised Learning (GCSL) \cite{gcsl} via minimizing a KL-divergence (see \ref{sec:hbc-proof}):
\begin{equation}
\begin{aligned}
&\argmin_{\pi} \mathcal{D}_{KL} (p_{\mu}(s, a, s^{+}) \parallel p_{\pi}^{\text{BC}}(s, a, g) ) \\
= &\argmin_{\pi} \E_{ \rho_{\mu}(s, a) p^{+}_{\mu} (s^{+} \mid s, a)}  [ -\log \pi(a \mid s, g) ] \label{hbc-derivation}
\end{aligned}
\end{equation}
In many cases, matching the states is more important than matching state-conditioned actions \citep{dagger, il-divergence}. The joint distribution for states, actions, \textit{behavioral goals} for $\pi$ (see Figure \ref{her-distribution}) is:
\begin{align}
    p_{\pi}(s, a, g) = {p(g)} \rho_{\pi}(s, a\mid g) \label{eq:her-graphical-model}
\end{align}
Resulting in the $f$-divergence objective for $\pi$ to minimize:
\begin{align}
    \mathcal{L}_{IL} = {D_{f} ( p(g)\rho_{\pi}(s, a | g) \parallel p_{\mu}^{+}(s^{+} | s, a) \rho_{\mu}(s, a) )} \label{imitation-learning-term}
\end{align}
After adding this state-matching term, we are still missing an important component of the objective. Divergence minimization encourages the agent to stay in the "right" state-action distribution given the benefit of hindsight. But we still want the policy to actually hit the goal as quickly as possible without detours (see Figure \ref{fig:objective-anology}). We propose that the goal-conditioned policy $\pi$ should therefore minimize:
\begin{align}
    \mathcal{L}_{RL} = \E_{\pi} \bigg[\dfrac{ \sum_{\Delta=0} \gamma^{\Delta} p_{\pi}(s_{t+1+\Delta} = g \mid s_{t}, a_{t}) \Delta }{ \sum_{\Delta=0} \gamma^{\Delta} p_{\pi}(s_{t+1+\Delta} = g \mid s_{t}, a_{t}) } \bigg] \label{eq:goal-reaching-fewer-steps}
\end{align}
Which estimates the expected number of steps for the policy to reach a certain goal. The overall objective for goal-conditioned RL can seen as a combination of $\mathcal{L}_{IL}$ and $ \mathcal{L}_{RL}$.
We will expand the two objectives in more details and discuss how to use $Q$-learning to optimize them.

\section{Bridging Goal Reaching and Imitation Learning}

In this section, we study how to optimize the proposed goal reaching objectives \eqref{imitation-learning-term} and \eqref{eq:goal-reaching-fewer-steps}, with the goal of deriving and understanding the $Q$-learning process and reward design in multi-goal RL. 

\subsection{Divergence Minimization with Goal-Conditioned Q-learning}

\label{section:f-divergence}

This section decomposes \eqref{imitation-learning-term}. We start with the $f$-divergence bound from equations \eqref{eq:f-divergence} and \eqref{gail-derivation}. 
\begin{equation*}
    \begin{aligned}
        &D_{f}( p_{\pi}(s, a, g) \parallel p_{\mu}(s, a, s^{+}) ) \\
        = & \max_{T} \E_{ \substack{p(g) \\ \rho_{\pi}(s, a\mid g) } } [T(s, a, g)] - \E_{ \substack{\rho_{\mu}(s, a) \\ p^{+}_{\mu} (s^{+} \mid s, a)} } [f^{*}(T(s, a, s^{+}))] \label{eq:f-divergence-hdm-init}
    \end{aligned}
\end{equation*}
Now we negate $T$ to get $r(s,a,g) = -T(s,a,g)$, and the divergence minimization problem becomes:
\begin{align*}
    \max_{\pi} \min_{r} \E_{ \substack{\rho_{\mu}(s, a) \\  p^{+}_{\mu} (s^{+} \mid s, a)} } [f^{*}(-r(s, a, s^{+}))] + \E_{ \substack{p(g) \\ \rho_{\pi}(s, a\mid g) } } [r(s, a, g)] \label{eq:gail-goal-conditioned}
\end{align*}
We can interpret $r$ as a GAIL-style \citep{gail} discriminator or reward. However, we aim to derive a discriminator-free learning process that directly trains the $Q$-function corresponding to this reward $Q(s,a,g) = r(s,a,g) + \gamma \cdot \mathcal{P}^{\pi}Q(s,a,g)$ where $\mathcal{P}^{\pi}$ is the \textit{transition operator}: $\mathcal{P}^{\pi}Q(s,a,g) = \E_{p(s'\mid s,a) \pi(a'\mid s', g)} [Q_(s',a',g)]$. Re-writing the previous equation w.r.t $Q$:
\begin{equation}
    \begin{aligned}
        \min_{Q} \E_{ \substack{\rho_{\mu}(s, a) \\ p^{+}_{\mu} (s^{+} \mid s, a)} } [f^{*}(-(Q-\gamma \cdot \mathcal{P}^{\pi}Q)(s, a, s^{+}))] \\
        + \E_{ \substack{p(g) \\ \rho_{\pi}(s, a\mid g) } } [(Q-\gamma \cdot \mathcal{P}^{\pi}Q)(s, a, g)]
    \end{aligned}
\end{equation}
A similar change-of-variable has been explored in the context of offline RL \citep{dualdice, algaedice} and imitation learning \citep{valuedice, OPOLO}, known as the DICE \citep{rl-fenchel-rockafellar} family. A major pain point of DICE-like methods is that they require samples from the initial state distribution $\rho^{0}(s)$ \citep{iq-learn}. The following lemma shows that in the goal-conditioned case, we can use arbitrary offline trajectories to evaluate the expected rewards under goal-conditioned online rollouts:
\begin{lemma}[Online-to-offline transformation for goal reaching]
\label{lemma:online-to-offline}
Given a goal-conditioned policy\\ $\pi(a\mid s,g)$, its corresponding $Q$-function $Q^{\pi}(s,a,g)$, and arbitrary state-action visitation distribution $\rho_{\mu}(s,a)$ of another policy $\mu(a\mid s)$, the expected temporal difference for online rollouts under $\pi$ is: 
\begin{equation*}
\begin{aligned}
& \E_{ p(g) {\color{blue}\rho_{\pi}(s, a\mid g)} } [(Q^{\pi}-\gamma \cdot \mathcal{P}^{\pi}Q^{\pi})({\color{blue} s}, {\color{blue} a}, g)] \\
= & \E_{ p(g) {\color{red}{\rho_{\mu}(s,a)}} {\color{blue} \pi(\tilde{a}\mid s,g)}  } [Q^{\pi}({\color{red} s}, {\color{blue} \tilde{a}}, g) - \gamma \cdot \mathcal{P}^{\pi}Q^{\pi}({\color{red} s}, {\color{red} a}, g) ]
\end{aligned}
\end{equation*}
\end{lemma}

Using Lemma \ref{lemma:online-to-offline}, the imitation objective now becomes:
\begin{equation}
    \begin{aligned}
        \max_{\pi} & \min_{Q} \E_{ \substack{\rho_{\mu}(s, a) p^{+}_{\mu} (s^{+} \mid s, a) }} \Big\{ f^{*}(-(Q- \gamma \mathcal{P}^{\pi}Q)(s,a,s^{+})) \\
        + & \E_{p(g) \pi(\tilde{a} \mid s, g)} [Q(s,\tilde{a},g) - \gamma \mathcal{P}^{\pi} Q(s,a,g)] \Big\} \label{eq:hdm-divergence-minimization}
    \end{aligned}
\end{equation}

where $f^{*}$ is the convex conjugate of $f$ in $f$-divergence. We can pick almost any convex function as $f^{*}$ as long as $((f^{*})^{*})(1) = 0$. In summary, we have derived a way to minimize the imitation term $\mathcal{L}_{IL} = \mathcal{D}_{f} (p_{\mu}(s, a, s^{+}) \parallel p_{\pi}(s, a, g) )$ directly using goal-conditioned $Q$-learning.

\subsection{Learning to Reach Goals with Fewer Steps}
\label{sec:reach-goals-fewer-steps}
A goal-conditioned agent should try to reach the desired goal using as few steps as possible. Consequently, we may want to learn the expected number of steps to reach another goal state from the current state, and then uses the policy $\pi$ to minimize the expected number of steps. 

The challenge, however, is that defining this value rigorously for continuous state space is a difficult task. With continuous state space, the delta function $\mathds{1}[s' = g]$ is always zero, and the dynamics function $p(s'\mid s,a)$ has a value in $\mathbb{R}^{+}$ rather than $[0,1]$; we can only estimate the likelihood $p(s'=g\mid s,a)$ whose range is $\mathbb{R}^{+}$ and thus cannot be normalized to be either in $\{0,1\}$ or within $[0,1]$ per step for the purpose of step counting. To tackle this issue, we propose the following definition to measure the expected number of steps (negated) from one state to another:
\begin{align}
    Q(s_{t}, a, g) &= - \dfrac{ \sum_{\Delta=0} \gamma^{\Delta} p_{\pi}(s_{t+1+\Delta} = g \mid s_{t}, a) \Delta }{ \sum_{\Delta=0} \gamma^{\Delta} p_{\pi}(s_{t+1+\Delta} = g \mid s_{t}, a) } \label{eq:goal-reaching-step-count}
\end{align}
which normalizes the likelihood of reaching the goal at different $\Delta$ numbers of steps away as discrete probabilities over the step count, with an additional $\gamma$ factor. Maximizing this $Q$ function means minimizing \eqref{eq:goal-reaching-fewer-steps}.
\begin{lemma}[Recursive estimate of goal reaching step count]
Under mild assumptions, the step count definition \eqref{eq:goal-reaching-step-count} has the following property:
\[ 
    Q_{t} = \E_{p(s_{t+1}\mid s_t, a_t)}[(1-\dfrac{\beta_{t} }{ \beta_{t} + \alpha_{t}^{(1)} } ) ( -1 + Q_{t+1} )]
\]
with $Q_{t} = Q(s_{t}, a, g)$, $Q_{t+1} = \E_{\pi(s_{t+1},g)}[Q(s_{t+1}, a', g)]$, $\alpha_{t}^{(i)} = \sum_{\Delta=i} \gamma^{\Delta} p_{\pi}(s_{t+1+\Delta} = g \mid s_{t}, a)$, and  $\beta_{t} = p(s_{t+1}=g \mid s_t, a)$.
\label{lemma:goal-reaching-step-count}
\end{lemma}
The first component $[1 - {\beta_{t} }/{ (\beta_{t} + \alpha_{t}^{(1)}) }]$ is classifying whether the goal can be reached in the immediate next step by comparing the two density ratios; the second component $(-1 + Q_{t+1})$ \textit{shifts} the distance estimate by another step in the case of the goal not being reached at $t+1$. Rather than directly estimating the density ratio, we may adopt a sampling based approach: when sampling from $\beta_{t}$, $Q_{t}$ should be $0$, meaning that $r=0$ (onward); when sampling from $\alpha_{t}^{(1)}$, $r=-1$. Consequently, the definition in \eqref{eq:goal-reaching-step-count} reveals that the reward defined in \eqref{eq:her-reward} forms a well-defined objective even for continuous state space and as $\epsilon\rightarrow 0$.

\subsection{To Reach Goals with Fewer Steps \emph{is} to Imitate}

We now discuss the relationship between the RL objective in \eqref{eq:goal-reaching-fewer-steps} and  \eqref{eq:goal-reaching-step-count} and the objective to imitate in \eqref{imitation-learning-term} and \eqref{eq:hdm-divergence-minimization}. 
\begin{lemma}[Understanding Hindsight Experience Replay]
\label{lemma:her-as-a-special-case}
Multi-goal $Q$-learning with HER reward \eqref{eq:her-reward} with $\{-1, 0\}$ is a special case of minimizing the following objective:
\[
    \E_{ \substack{\rho_{\mu}(s, a) \\ p(s'\mid s,a) \\ p^{+}_{\mu} (s^{+} \mid s, a) }} [f^{*}(-(Q- \gamma \mathcal{P}^{\pi}Q)(s,a,s^{+})) - \beta Q(s,a,s')]
\]
with $\beta=(1-\gamma)$ and the convex function $f^{*}$ chosen to be:
\[
f^{*}(x) = (x-1)^{2} / 2 + 3/2
\]
\end{lemma}
While the step counting interpretation in \eqref{eq:goal-reaching-step-count} already provides valid meaning to the HER rewards, re-writing the objective in this way reveals why this optimization process is also doing imitation learning. Its first term is the same as the one in divergence minimization objective \eqref{eq:hdm-divergence-minimization}. For the second term, rather than pushing \textit{down} on the $Q$ values under the current policy $\pi$ in \eqref{eq:hdm-divergence-minimization}, which would incorrectly assume the optimality of self-generated demonstrations and hinder exploration, it pushes \textit{up} on the transition tuple $(s,a,s')$, where the action $a$ is guaranteed to be optimal for reaching the immediate next state $s'$. As a result, this objective encourages the policy to imitate the best in self-generated demonstrations without restraining exploration.

\subsection{Can Hindsight BC Facilitate Q-learning?}
\label{sec:hindsight-bc-helps}

Now that we have shown that the $Q$-learning process in HER is also doing a special form of imitation, one may wonder whether BC \eqref{hbc-derivation} has any additional role to play. We hypothesize that, for a softmax policy under discrete action space, hindsight BC might facilitate the $Q$-learning process. We have reasons to hypothesize so: under discrete action space, BC becomes a cross entropy loss and has been repeatedly shown to work very well \citep{one-shot-imitation, latent-plan-from-play, bc-zero, bahvior-transformer, robotics-transformer}. Even large language models like GPT3 \citep{gpt3} can be seen as a softmax policy on discrete action space trained with BC. The $Q$-values can be seen as the logits of a softmax policy \citep{equivalence-pg-sql}. The problem is that hindsight relabeled actions $(s,a,s^{+})$ are often suboptimal beyond a single step of relabeling $(s,a,s')$. Utilizing the interpretation of $Q$-values as negated step count in \eqref{eq:goal-reaching-step-count}, we can imitate only those actions that move the agent closer to a goal according to its own estimates, by defining a threshold $w(s,a,s',g)$ for imitation:
\begin{align}
    w = \mathds{1} (Q(s,a,g) - \max_{a'}Q(s',a',g) < \log \gamma_{\text{hdm}})
\end{align}
and the additional objective on top of $Q$-learning with $\{-1, 0\}$ rewards (in \ref{lemma:her-as-a-special-case}) becomes $\mathcal{L}_{\text{hdm}}$:
\begin{align}
    \E_{ \substack{ \rho_{\mu}(s, a, s', g) } } [ - w(s,a,s',g) \cdot \log \text{softmax} Q (s,a,g) ] \label{eq:hdm-final-loss}
\end{align}
which we call Hindsight Divergence Minimization (HDM). Intuitively, this additional loss imitates an action when the value functions believe that this action can move the agent closer to the goal by at least $-\log \gamma_{\text{hdm}}$ steps. The idea of \textit{imitating the best actions} is similar to self-imitation learning (SIL) \citep{sil, alphastar}, but we study a (reward-free) goal-reaching setting, with the advantage function in SIL being replaced by the \textit{delta} of step counts a particular action can produce in getting closer to the goal.
\begin{figure*}[h]
    \centering
    \includegraphics[width=0.8\textwidth]{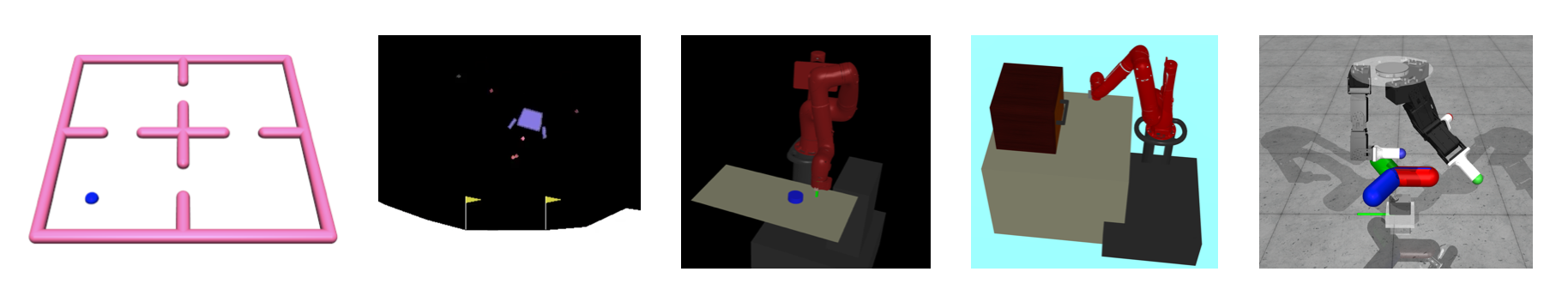}
    \caption{\footnotesize Goal-reaching environments from GCSL \citep{gcsl} that we consider for discrete action space experiments: reaching a goal location in \textit{Four Rooms}, landing at a goal location in \textit{Lunar Lander}, pushing a puck to a goal location in \textit{Sawyer Push}, opening the door to a goal angle in \textit{Door Open} \citep{rig}, turning a valve to a goal orientation in \textit{Claw Manipulate} \citep{robel}. Importantly, those environments have \textbf{discretized action space}. We use {softmax policy} where $Q$-values are policy logits.}
    \label{fig:experiment-environments}
\end{figure*}

\section{Related Work}

\paragraph{Frameworks for Goal Reaching} 
Many attempts have been made to rigorously formulate a theoretical framework for the goal reaching problem. The earliest work that aims to formulate goal-conditioned $Q$-functions as a step count is \citep{learning-to-achieve-goals}, though it only considers discrete state space. The framework introduced in \citep{rewriting-history-with-inverse-rl} considers a finite horizon MDP where the agent receives a $-\infty$ reward at the final timestep if the final state does not match the goal and a $0$ reward in all other conditions, which does not match how multi-goal $Q$-learning works in practice. Hindsight EM \citep{hindsight-em} borrows concepts from control as inference \citep{control-as-inference} but only considers hindsight BC for policy learning. The difficulty of defining the objective of HER \citep{her} discussed in \ref{sec:reach-goals-fewer-steps} was also previously noted in C-learning \citep{c-learning}, which concludes that "\textit{it is unclear what quantity Q-learning with hindsight relabeling optimizes}" and regards the $Q$-values of HER for continuous states as ill-defined. C-learning then proposed to directly estimate \eqref{eq:future-state-distribution} as a workaround. Similarly, \citep{outcome-driven} advocates directly fitting a dynamics model and using per-step reward $\log p(s'=g\mid s,a)$ for goal-reaching. However, those frameworks do not explain the success of simple binary rewards for goal reaching in continuous state space \citep{her, discern}, which still achieves state-of-the-art results today \citep{actionablemodels}. By contrast, our work clearly defines what hindsight relabeling under HER reward optimizes by illustrating its underlying graphical models and the meaning of its $Q$-values, both of which remain well-defined in the case of continuous state space. 
\paragraph{Connecting Goal Reaching with Inverse RL} Classical inverse RL either uses a max margin loss in the apprenticeship learning formulation or contrastive divergence loss \citep{contrastive-divergence} in the maximum entropy formulation \citep{algorithms-for-irl, apprenticeship, ziebart2008maximum, ho-irl}, and those ideas have been borrowed accordingly in the goal reaching literature \citep{contrastive-rl, outcome-driven}. GAIL \citep{gail} uses a GAN-like discriminator \citep{gan} as the reward, and \citep{hbc} augments binary rewards in HER with smoother GAIL rewards for goal reaching. The introduction of $f$-GAN \citep{fgan} generalizes GAN to more $f$-divergences, which led to a similar generalization of GAIL \citep{il-divergence, f-gail}, which led to $f$-divergence based exploration strategy \citep{f-divergence-exploration} and a goal reaching algorithm without hindsight relabeling \citep{how-far-i-ll-go}. SQIL \citep{sqil} is an imitation learning method that runs $Q$-learning on two constant rewards: $r=1$ for expert data and $r=0$ for policy data; SQIL can often outperform GAIL. The SQIL reward is similar to the HER reward in some ways, but prior works have not shown how this type of reward does imitation in goal reaching.

\section{Experiments}

\begin{figure}[h]
    \centering
    \includegraphics[width=0.45\textwidth]{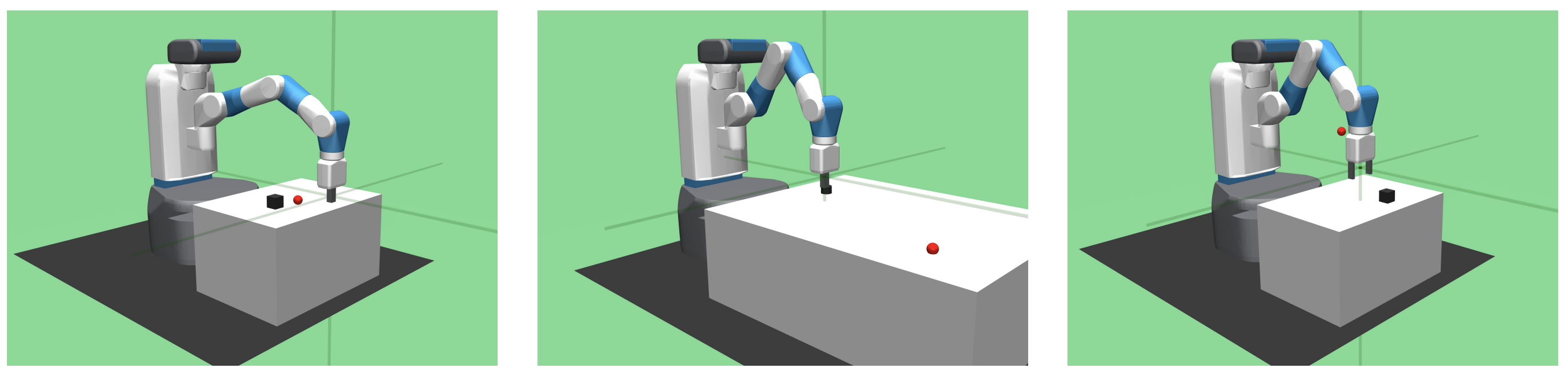}
    \caption{\footnotesize Goal reaching environments from \citep{multi-goal-envs} we consider for benchmarking multi-goal rewards: HER \citep{her} with $(-1, 0)$ rewards, AM \citep{actionablemodels} with $(0,1)$ rewards, HER with $(0,1)$ rewards.}
    \label{fig:fetch-envs}
\end{figure}

\begin{figure*}[h]
    \centering
    \includegraphics[width=0.9\textwidth]{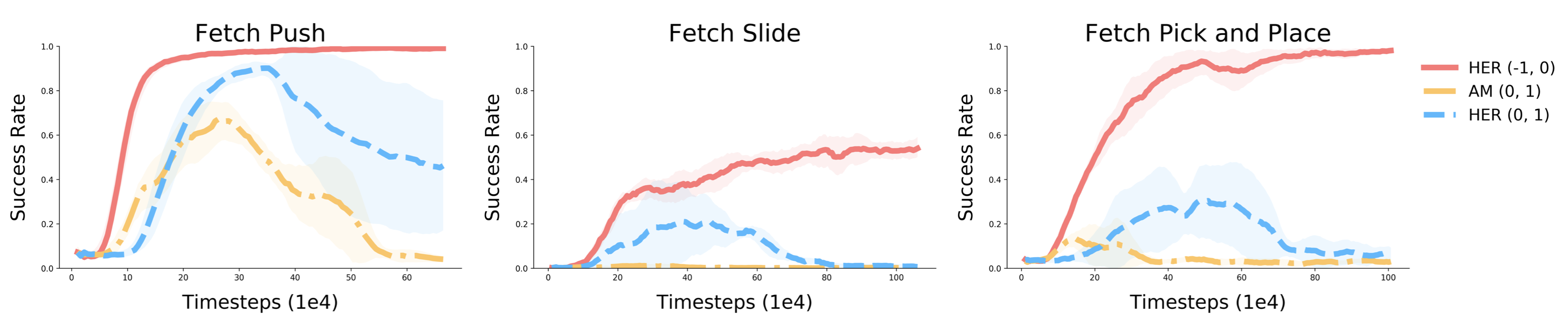}
    \caption{{Success rate comparisons between different multi-goal RL rewards on Fetch environments.}  The only difference between three methods is the reward and backup strategies; all other parts of implementation are the same. Results over 5 seeds are shown. \textbf{Using a reward of $\{-1, 0\}$ is crucial for learning success.}}
    \label{fig:learning-curve-different-rewards}
\end{figure*}

\begin{figure*}[h]
    \centering
    \includegraphics[width=0.9\textwidth]{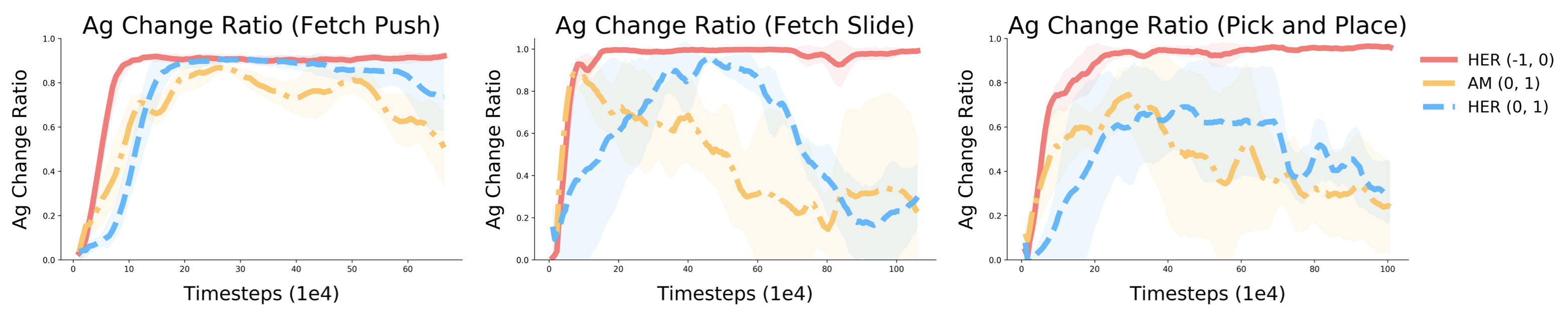}
    \caption{We find that the quantity \textit{ag change ratio} is strongly indicative of learning progress in multi-goal RL. 
    We define the \textit{ag change ratio} of $\pi$ as: the percentage of trajectories where the achieved goals (\textit{ag}) in initial states {$s_{0}$} are different from the achieved goals in final states {$s_{T}$} under $\pi$. Most training signals are only created from \textit{ag} changes, because they provide examples of how to {rearrange} an environment. An increase in ag change ratio often \textit{precedes} an increase in success rates.}
    \label{fig:ag-change-ratio-curve}
\end{figure*}

\begin{figure*}[h]
    \centering
    \includegraphics[width=0.8\textwidth]{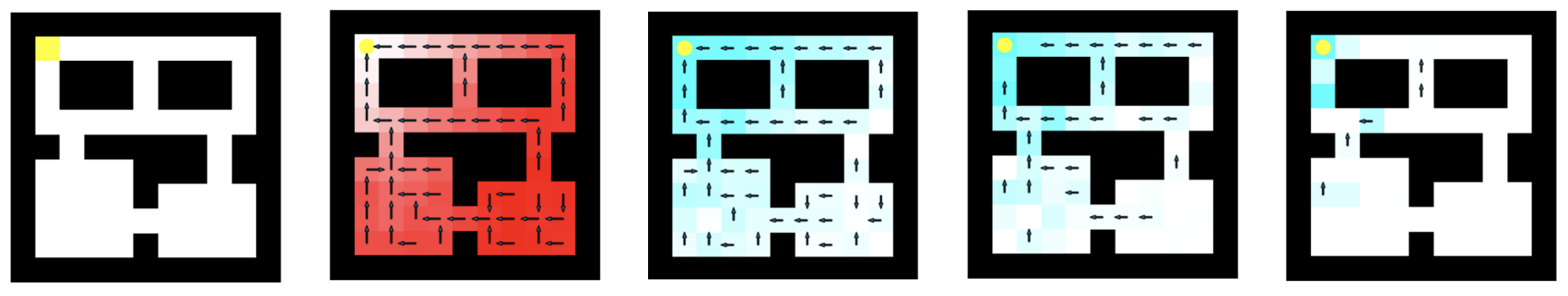}
    \caption{\footnotesize Intuitions about when HDM applies BC. From left to right: (a) the Maze environment with a goal in the upper-left corner; (b) the $Q$-values learned through the converged policy. Lighter red means higher Q-value; (c) visualizing the actions (from the replay) that get imitated when conditioning on the goal and setting $\gamma_{\text{hdm}}=0.95$, with the background color reflecting how much an action moves the agent closer to the goal \textit{based on the agent's own estimate}; (d) $\gamma_{\text{hdm}}=0.85$; (e) $\gamma_{\text{hdm}}=0.75$. As we lower $\gamma_{\text{hdm}}$, the threshold $-\log \gamma_{\text{hdm}}$ gets higher and fewer actions get imitated, with the remaining imitated actions more concentrated around the goal. HDM uses $Q$-learning to account for the worst while imitating the best during the goal-reaching process.}
    \label{fig:maze-visualization}
\end{figure*}

There are two parts of our experimental investigations.
\begin{itemize}
    \item We study how reward designs affect the goal reaching performance, and demonstrate that \textbf{in multi-goal RL, seemingly insignificant details about rewards can lead to the striking difference between success and total failure in learning}. The experimental results are in accordance with our theoretical framework which suggests that $\{-1, 0\}$ rewards should work best.
    \item In the second part, we study the special case of a softmax policy on discrete action space, and find that despite advances in Hindsight BC, HER can still achieve significantly better results. Moreover, a vanilla combination of HER + HBC hurts performance, but our proposed HDM loss (a variant of BC) can further improve the performance of HER on discrete action space. 
\end{itemize}

\subsection{Investigating Reward Designs in Goal Reaching}
\label{sec:reward-design-results}
\begin{table*}[h]

\addtolength{\tabcolsep}{-1pt}
\small
\centering
\begin{tabular}{c@{\hspace{5pt}}lr@{\hspace{-1pt}}lr@{\hspace{-1pt}}lr@{\hspace{-1pt}}lr@{\hspace{-1pt}}lr@{\hspace{-1pt}}l}
\toprule
\multicolumn{1}{l}{}               &    Success Rate (\%)        & \multicolumn{2}{c}{Four Rooms}                                     & \multicolumn{2}{c}{Lunar Landar}              & \multicolumn{2}{c}{Sawyer Push}               & \multicolumn{2}{c}{Door Opening}                                    & \multicolumn{2}{c}{Claw Manipulate} \\
\midrule
\multirow{2}{*} & GCSL / HBC & \colorbox{white}{78.27}   & {\color[HTML]{525252} $\pm$4.76}  & \colorbox{white}{50.00}                              & {\color[HTML]{525252} $\pm$7.77} & \colorbox{white}{44.67}  & {\color[HTML]{525252} $\pm$13.86}  & \colorbox{white}{19.10}  & {\color[HTML]{525252} $\pm$5.97}  & \colorbox{white}{16.80} & {\color[HTML]{525252} $\pm$6.55} \\

&HER $r=(0, 1)$ & \colorbox{white}{86.60}   & {\color[HTML]{525252} $\pm$4.22}  & \colorbox{white}{39.30}                              & {\color[HTML]{525252} $\pm$6.81} & \colorbox{white}{57.60}  & {\color[HTML]{525252} $\pm$6.61}  & \colorbox{white}{82.50}  & {\color[HTML]{525252} $\pm$4.87}  & \colorbox{white}{22.80} & {\color[HTML]{525252} $\pm$6.43} \\

& HER + SQL& \colorbox{white}{88.50}   & {\color[HTML]{525252} $\pm$4.56}  & \colorbox{white}{44.50}                              & {\color[HTML]{525252} $\pm$9.61} & \colorbox{white}{57.20}  & {\color[HTML]{525252} $\pm$6.32}  & \colorbox{white}{84.70}  & {\color[HTML]{525252} $\pm$5.33}  & \colorbox{white}{16.13} & {\color[HTML]{525252} $\pm$8.43} \\

&HER $r=(-1, 0)$ & \colorbox{white}{86.40}   & {\color[HTML]{525252} $\pm$5.11}  & \colorbox{white}{50.80}                              & {\color[HTML]{525252} $\pm$4.66} & \colorbox{white}{54.60}  & {\color[HTML]{525252} $\pm$6.16}  & \colorbox{white}{83.76}  & {\color[HTML]{525252} $\pm$6.02}  & \colorbox{white}{20.20} & {\color[HTML]{525252} $\pm$6.23}\\

&HER + HBC& \colorbox{white}{82.90}   & {\color[HTML]{525252} $\pm$6.24}  & \colorbox{white}{35.33}                              & {\color[HTML]{525252} $\pm$4.57} & \colorbox{white}{52.63}  & {\color[HTML]{525252} $\pm$8.05}  & \colorbox{white}{76.44}  & {\color[HTML]{525252} $\pm$5.37}  & \colorbox{white}{16.93} & {\color[HTML]{525252} $\pm$8.03}\\

&HDM (ours) & \colorbox{mine}{96.27}   & {\color[HTML]{525252} $\pm$2.56}  & \colorbox{mine}{57.60}                              & {\color[HTML]{525252} $\pm$7.21} & \colorbox{mine}{66.00}  & {\color[HTML]{525252} $\pm$5.13}  & \colorbox{mine}{88.60}  & {\color[HTML]{525252} $\pm$4.63}  & \colorbox{mine}{27.89} & {\color[HTML]{525252} $\pm$6.46} \\

\bottomrule
\end{tabular}
\vspace{6pt}
\caption{\footnotesize Benchmark results of test-time success rates in self-supervised goal-reaching, over 5 seeds. We compare our method HDM with GCSL \citep{gcsl}, HER \citep{her} with two different types of rewards, SQL \citep{equivalence-pg-sql} (Soft Q-Learning) + HER, and HER + HBC. We find that HER can still outperforms GCSL / HBC, and a vanilla combination of HER + HBC actually hurts performance. HDM selectively decides on what to imitate and outperforms HER and HBC.} \label{table:results}

\vspace{-3pt}

\end{table*}

We investigate 3 different reward designs for goal reaching:

\textbf{Hindsight Experience Replay (HER)} \citep{her} with $\{-1, 0\}$ rewards:
\begin{align}
    Q(s,a,g) = \begin{cases}
      \gamma \cdot Q(s',\pi(s',g), g) & s'=g \\
      -1 + \gamma \cdot Q(s',\pi(s',g), g) & s' \neq g 
    \end{cases}
\end{align}
\textbf{Actionable Models (AM)} \citep{actionablemodels} uses $\{0, 1\}$ rewards, but directly defines $Q(s,a,s')$ as $1$:
\begin{align}
    Q(s,a,g) = \begin{cases}
      1 & s'=g \\
      \gamma \cdot Q(s',\pi(s',g), g) & s' \neq g 
    \end{cases}
\end{align}
\textbf{HER with $\{0, 1\}$ rewards} which is also a modified version of actionable models where the bellman backup continues after the goal is reached:
\begin{align}
    Q(s,a,g) = \begin{cases}
      1 + \gamma \cdot Q(s',\pi(s',g), g) & s'=g \\
      \gamma \cdot Q(s',\pi(s',g), g) & s' \neq g 
    \end{cases}
\end{align}
We use widely benchmarked Fetch environments \citep{multi-goal-envs} for multi-goal RL as the task suite to compare those three strategies. For all three methods, we use the same set of hyper-parameters (including learning rate, batch size, network architecture, target network update frequency and polyak value, etc). The only differences are the rewards and bellman backups . The results are presented in Figure \ref{fig:learning-curve-different-rewards}. 
\paragraph{Why does $\{-1, 0\}$ reward work so well?} Our framework shows that running $Q$-learning with $\{-1, 0\}$ rewards leads to a probabilistically well-defined $Q$ values in the form of a normalized step count \eqref{eq:goal-reaching-step-count}. One can perhaps interpret HER with $\{0, 1\}$ rewards as adding a constant offset $1 / (1 - \gamma)$ on the converged $Q$-values of $\{-1, 0\}$ rewards, but this offset is quite large and eventually causes learning to diverge in this case. Actionable Models (AM) address this issue by stopping further bellman backup once the goal is reached $(s'=g)$ and directly setting the $Q$-value to be $1$. This does not align with our analysis of the divergence minimization objective \eqref{eq:hdm-divergence-minimization}, and fails to teach the policy to stop at the goal state once the goal is reached: under this bellman backup, once the goal is reached once, nothing matters afterwards. To summarize, using a reward of $\{-1,0\}$ is crucial for the success of multi-goal $Q$-learning with hindsight relabeling. 


\subsection{Goal Reaching with Discrete Action Space}

\label{subsection:self-supervised-setup}

In this section, we study whether our proposed HDM loss \eqref{eq:hdm-final-loss} can facilitate the goal-conditioned $Q$-learning process on environments with either discrete or discretized action space (Figure \ref{fig:experiment-environments}), as mentioned in \ref{sec:hindsight-bc-helps}. Hindsight BC (or GCSL) with discrete action space is known to have promising results even in a totally self-supervised goal reaching setting without external demonstrations \citep{gcsl}. To provide a fair comparison to HBC, we do not assume that the agent has direct access to the ground truth binary reward metric. Note the original HER uses this metric during relabeling \citep{her}, but assuming access to this feedback is often unrealistic for real-world robot-learning \citep{rl-without-gt-state, actionablemodels}. Instead, a positive reward is provided only when the relabeled hindsight goal is the immediate next state. As a result, the training procedure is completely self-supervised, similar to HBC. Additionally, we also consider the HER + Soft Q-Learning (SQL) \citep{equivalence-pg-sql} baseline, since SQL often improves policy robustness \citep{sac}. HDM builds on top of HER with $(-1,0)$ rewards and adds a BC-like loss with a clipping condition \eqref{eq:hdm-final-loss} such that only the actions that move an agent closer to the goal get imitated. See Figure \ref{fig:maze-visualization} for more visualizations of when HDM applies the BC loss on top of a $Q$-learning process. 

The results in Table \ref{table:results} show that HDM achieves the strongest performance on all environments, while reducing variances in success rates. Interestingly, there is no consensus best baseline algorithm, with all five algorithms achieving good results in some environments and subpar performance in others. Combining HER with HBC (which blindly imitates all actions in hindsight) produces worse results than not imitating at all and only resorting to value learning. In contrast, HDM allows for better control over what to imitate. Further ablations on HDM are provided in the \ref{fig:ablation-studies} of the appendix, which shows that HDM outperforms HER and GCSL across a variety of $\gamma_{\text{hdm}}$ values. Our results indicate that BC alone can only serve as an auxiliary (or perhaps pre-training) objective; value learning is needed to implicitly model the future \eqref{eq:goal-reaching-fewer-steps} and improve the policy.

\section{Conclusion}

This work presents a novel goal reaching framework that exploits the deep connection between multi-goal RL and inverse RL to derive a family of goal-conditioned RL algorithms. By understanding hindsight goal relabeling from a divergence minimization perspective, our framework reveals the importance of reward design in multi-goal RL, which is found to significantly affect learning performance in our experiments. Furthermore, we propose an additional hindsight divergence minimization (HDM) loss, which uses $Q$-learning to account for the worst while using BC to imitate the best, and demonstrate its superior performance on discrete action space. In the future, we hope to further develop our framework to explicitly account for exploration.

\bibliography{example_paper}
\bibliographystyle{icml2023}

\newpage
\appendix
\onecolumn

\section{Proofs}

\subsection{Deriving Hindsight Behavior Cloning}
\label{sec:hbc-proof}
We want to prove the equivalence defined in \eqref{hbc-derivation}
\begin{equation*}
\begin{aligned}
&\argmin_{\pi} \mathcal{D}_{KL} (p_{\mu}(s, a, s^{+}) \parallel p_{\pi}^{\text{BC}}(s, a, g) ) \\
= &\argmin_{\pi} \E_{ \rho_{\mu}(s, a) p^{+}_{\mu} (s^{+} \mid s, a)}  [ -\log \pi(a \mid s, g) ]
\end{aligned}
\end{equation*}
We use the definition of KL divergence:
\begin{align*}
    D_{KL}(P \parallel Q) = \int_{X} P(x) \log \dfrac{P(x)}{Q(x)} \mathrm{d}x
\end{align*}
And the graphical models of the two distributions:
\begin{align*}
    p_{\mu}(s, a, s^{+}) &= \rho_{\mu}(s, a) p^{+}_{\mu} (s^{+} \mid s, a) \\
    p_{\pi}^{\text{BC}}(s, a, g) &= {p(g)}  \rho_{\mu}(s) \pi(a\mid s, g) 
\end{align*}
Resulting in
\begin{align*}
    D_{KL}(p_{\mu}(s, a, s^{+}) \parallel p_{\pi}^{\text{BC}}(s, a, g)) &= \E_{p_{\mu}(s, a, s^{+})}[ \log p_{\mu}(s, a, s^{+}) - \log p_{\pi}^{\text{BC}}(s, a, g) ] \\
    &= \E_{p_{\mu}(s, a, s^{+})}[ \log p_{\mu}(s, a, s^{+}) - \log {p(g)} - \log \rho_{\mu}(s) - \log \pi(a\mid s, g)  ] \\
    &= \E_{\rho_{\mu}(s, a) p^{+}_{\mu} (s^{+} \mid s, a)} \bigg[ \log \dfrac{p_{\mu}(s, a, s^{+})}{p(g)\rho_{\mu}(s)} - \log \pi(a\mid s, g)  \bigg]
\end{align*}
where we find that 
\begin{equation*}
\begin{aligned}
\argmin_{\pi} \mathcal{D}_{KL} (p_{\mu}(s, a, s^{+}) \parallel p_{\pi}^{\text{BC}}(s, a, g) ) = \argmin_{\pi} \E_{ \rho_{\mu}(s, a) p^{+}_{\mu} (s^{+} \mid s, a)}  [ -\log \pi(a \mid s, g) ]
\end{aligned}
\end{equation*}

\subsection{Main Lemmas}

\begin{lemma}[Online-to-offline transformation for goal reaching]
Given a goal-conditioned policy\\ $\pi(a\mid s,g)$, its corresponding $Q$-function $Q^{\pi}(s,a,g)$, and arbitrary state-action visitation distribution $\rho_{\mu}(s,a)$ of another policy $\mu(a\mid s)$, the expected temporal difference for online rollouts under $\pi$  is:
\[ \E_{ p(g) {\color{blue}\rho_{\pi}(s, a\mid g)} } [(Q^{\pi}-\gamma \cdot \mathcal{P}^{\pi}Q^{\pi})({\color{blue} s}, {\color{blue} a}, g)] = \E_{ p(g) {\color{red}{\rho_{\mu}(s,a)}} {\color{blue} \pi(\tilde{a}\mid s,g)}  } [Q^{\pi}({\color{red} s}, {\color{blue} \tilde{a}}, g) - \gamma \cdot \mathcal{P}^{\pi}Q^{\pi}({\color{red} s}, {\color{red} a}, g) ]
\]
\end{lemma}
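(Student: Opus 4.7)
The plan is to show both sides of the identity reduce to the common quantity
\[
(1-\gamma)\,\E_{p(g)\,\rho^{0}(s)\,\pi(a\mid s,g)}\big[Q^{\pi}(s,a,g)\big],
\]
i.e.\ the expected value of $\pi$ starting from the initial state distribution, for each behavioural goal. This is the goal-conditioned analogue of the standard fact that an on-policy expected Bellman residual equals $(1-\gamma)$ times the initial-state value, and showing that both sides share this reduction is exactly the content of the lemma.

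The engine of the proof is a Bellman-flow identity that I would first extract from \eqref{state-transition} and \eqref{state-distribution-aggregate}: for any policy $\nu(a\mid s)$ whose discounted visitation $\rho_{\nu}$ is generated from the common initial distribution $\rho^{0}$, and any integrable $V$,
\[
\gamma\,\E_{\rho_{\nu}(s,a)\,p(s'\mid s,a)}\big[V(s')\big] \;=\; \E_{\rho_{\nu}(s')}\big[V(s')\big] \;-\; (1-\gamma)\,\E_{\rho^{0}(s')}\big[V(s')\big].
\]
The derivation is a one-line index shift: substitute $\rho_{\nu}(s) = (1-\gamma)\sum_{t\ge 0}\gamma^{t}\rho_{\nu}^{t}(s)$ on the left, use \eqref{state-transition} to identify the pushforward under $p$ with $\rho_{\nu}^{t+1}$, reindex $t\mapsto t+1$, and recognise $\rho_{\nu}(s')$ on the right.

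Next I would apply this identity to each side, using the shorthand $V^{\pi}(s,g) := \E_{\pi(a\mid s,g)}[Q^{\pi}(s,a,g)]$, so that $\mathcal{P}^{\pi}Q^{\pi}(s,a,g) = \E_{p(s'\mid s,a)}[V^{\pi}(s',g)]$. The LHS splits as $\E_{p(g)\,\rho_{\pi}(s\mid g)}[V^{\pi}(s,g)] - \gamma\,\E_{p(g)\,\rho_{\pi}(s,a\mid g)\,p(s'\mid s,a)}[V^{\pi}(s',g)]$; the flow identity with $\nu = \pi(\cdot\mid\cdot,g)$ at each fixed $g$ collapses this to $(1-\gamma)\,\E_{p(g)\,\rho^{0}(s)}[V^{\pi}(s,g)]$. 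The RHS splits analogously as $\E_{p(g)\,\rho_{\mu}(s)}[V^{\pi}(s,g)] - \gamma\,\E_{p(g)\,\rho_{\mu}(s,a)\,p(s'\mid s,a)}[V^{\pi}(s',g)]$; applying the identity now with $\nu = \mu$ collapses it to the same $(1-\gamma)\,\E_{p(g)\,\rho^{0}(s)}[V^{\pi}(s,g)]$, and the two sides agree.

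The conceptual crux, and the ``hard part'' insofar as there is one, is that the flow identity applies to \emph{any} behaviour policy $\nu$: the transition operator $\mathcal{P}^{\pi}$ only touches the environment kernel $p$ and the bootstrapped action $a' \sim \pi(\cdot\mid s',g)$, and is oblivious to which policy produced the $(s,a)$ at which it is evaluated. That decoupling of the evaluated-action distribution (drawn from $\pi$) from the evaluated-state distribution (drawn from $\mu$) is exactly what underwrites the RHS, and is the property that makes the DICE-style off-policy evaluation in \eqref{eq:hdm-divergence-minimization} tractable on arbitrary offline trajectories. One small assumption I would flag is that $\rho_{\mu}$ and $\rho_{\pi}$ share the initial distribution $\rho^{0}$; otherwise the two $(1-\gamma)\,\E_{\rho^{0}}$ residuals would fail to cancel and the identity would acquire a correction proportional to the initial-distribution mismatch.
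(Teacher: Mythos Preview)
Your proposal is correct and follows essentially the same route as the paper. The paper performs the telescoping sum explicitly on both sides---writing $\rho_{\pi}(s\mid g)=(1-\gamma)\sum_{t}\gamma^{t}\rho_{\pi}^{t}(s\mid g)$, collapsing the LHS to $(1-\gamma)\,\E_{p(g)\,\rho^{0}(s)\,\pi(a\mid s,g)}[Q^{\pi}(s,a,g)]$, and then un-telescoping with $\rho_{\mu}^{t}$ to reach the RHS---whereas you package that telescoping once as the Bellman-flow identity and invoke it twice; the content is identical, and your flagging of the shared-$\rho^{0}$ assumption is exactly the hinge the paper uses implicitly.
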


\begin{proof}[Proof of Lemma \ref{lemma:online-to-offline}]

\begin{align*}
    & \E_{ p(g) {\color{blue}\rho_{\pi}(s, a\mid g)} } [(Q^{\pi}-\gamma \cdot \mathcal{P}^{\pi}Q^{\pi})({\color{blue} s}, {\color{blue} a}, g)] \\
    &= \E_{ p(g) \color{blue}{\rho_{\pi}(s, a\mid g)} } [Q^{\pi}(s, a, g) - \gamma \E_{p(s'\mid s,a), \pi(a'\mid s', g)} Q^{\pi}(s', a', g) ] \\
    &= (1-\gamma) \sum_{t=0}^{\infty} \gamma^{t} \E_{ \substack{ p(g) \color{blue}{\rho^{t}_{\pi} (s \mid g)} \\ \pi(a\mid s, g)} } \Big[ Q^{\pi}(s,a,g) - \gamma \E_{\substack{p(s'\mid s,a)\\ \pi(a'\mid s', g)} } Q^{\pi}(s',a',g) \Big] \\
    &= (1-\gamma) \sum_{t}^{\infty} \Bigg\{ \gamma^{t} \E_{ \substack{ p(g) \\ \color{blue}{\rho^{t}_{\pi} (s\mid g)} \\ \pi(a\mid s,g)} }[Q^{\pi}(s,a,g)] - \gamma^{t+1} \E_{ \substack{ p(g) \\ \color{blue}{\rho^{t+1}_{\pi} (s \mid g)} \\ \pi(a\mid s,g)}}[Q^{\pi}(s,a,g)] \Bigg\} \\
    &= (1-\gamma) \E_{p(g), {\color{red}{\rho^{0}(s)}}, \pi(a\mid s,g)} [Q^{\pi}(s,a,g)] \\
    &= (1-\gamma) \sum_{t}^{\infty} \Bigg\{ \gamma^{t} \E_{ \substack{ p(g) \\ \color{red}{\rho^{t}_{\mu} (s)} \\ \pi(a\mid s,g)} }[Q^{\pi}(s,a,g)] - \gamma^{t+1} \E_{ \substack{ p(g) \\ \color{red}{\rho^{t+1}_{\mu} (s)} \\ \pi(a\mid s,g)}}[Q^{\pi}(s,a,g)] \Bigg\} \\
    &= (1-\gamma) \sum_{t=0}^{\infty} \gamma^{t} \E_{ \substack{ p(g) \color{red}{\rho^{t}_{\mu} (s,a)} \\ {\color{blue} \pi(\tilde{a}\mid s,g)} } }[ Q^{\pi}(s,{\color{blue}\tilde{a}},g) - \gamma \E_{\substack{{\color{red} p(s'\mid s,a)} \\ \pi(a'\mid s',g)} } Q^{\pi}(s',a',g) ] \\
    &= \E_{ p(g) {\color{red}{\rho_{\mu}(s,a)}} {\color{blue} \pi(\tilde{a}\mid s,g)}  } [Q^{\pi}(s, {\color{blue} \tilde{a}}, g) - \gamma \E_{{\color{red} p(s'\mid s,a)}, \pi(a'\mid s', g)} Q^{\pi}(s', a', g) ]
\end{align*}

\end{proof}

\subsection{$Q$ Function as Step Counts}

We make the following definitions:
\begin{align*}
    Q_{t} &= Q(s_{t}, a_{t}, g) \\
    Q_{t+1} &= \E_{\pi(s_{t+1},g)}[Q(s_{t+1}, a', g)] \\
    \beta_{t} &= p(s_{t+1}=g\mid s_t, a_t) \\
    \alpha_{t}^{(i)} &= \sum_{\Delta=i} \gamma^{\Delta} p_{\pi}(s_{t+1+\Delta} = g \mid s_{t}, a_{t})
\end{align*}
And make the following mild assumptions:
\begin{align*}
    1/\E_{\pi(a\mid s_{t},g)}[\alpha_{t}^{(0)}] &= \E_{\pi(a\mid s_{t}, g)}[1 / \alpha_{t}^{(0)}] \\
    1/\E_{p(s_{t+1}\mid s_{t}, a_{t})}[\alpha_{t+1}^{(1)}] &= \E_{p(s_{t+1}\mid s_{t}, a_{t})}[1 / \alpha_{t}^{(1)}]
\end{align*}
Which can be combined to reach the result:
\begin{align*}
    1 / \alpha_{t}^{(1)} = \E_{p(s_{t+1}\mid s_{t}, a_{t}) \pi(a_{t+1}'\mid s_{t+1},g) } [1 / \alpha_{t+1}^{(1)}]
\end{align*}
Meaning that for an infinite horizon MDP where all goals are eventually reached, the reciprocal of a geometrically summed future likelihood of reaching a goal remains approximately the same in expectation under rollouts.

\begin{lemma}[Recursive estimate of goal reaching step count]
Under the notations and the assumptions above, the step count definition \eqref{eq:goal-reaching-step-count} has the following property:
\[ 
    Q_{t} = \E_{p(s_{t+1}\mid s_t, a_t)}[(1-\dfrac{\beta_{t} }{ \beta_{t} + \alpha_{t}^{(1)} } ) ( -1 + Q_{t+1} )] 
\]
\end{lemma}

\begin{proof}[Proof of Lemma \ref{lemma:goal-reaching-step-count}]

We start from the definition of our $Q$ value:
\begin{align*}
    Q(s_{t}, a_{t}, g) &= - \dfrac{ \sum_{\Delta=0} \gamma^{\Delta} p_{\pi}(s_{t+1+\Delta} = g \mid s_{t}, a_{t}) \cdot \Delta }{ \sum_{\Delta=0} \gamma^{\Delta} p_{\pi}(s_{t+1+\Delta} = g \mid s_{t}, a_{t}) } 
\end{align*}
$Q^{\pi}(s_{t}, a_{t}, g)$ can be expanded as
\begin{align*}
    - \dfrac{ \gamma p(s_{t+2}=g \mid s_t, a_t) + 2\gamma^{2}p(s_{t+3}=g \mid s_t, a_t) + 3\gamma^{3}p(s_{t+4}=g \mid s_t, a_t) + \cdots } { p(s_{t+1}=g \mid s_t, a_t) + \gamma p(s_{t+2}=g \mid s_t, a_t) + \gamma^{2}p(s_{t+3}=g \mid s_t, a_t) + \gamma^{3}p(s_{t+4}=g \mid s_t, a_t) + \cdots  } 
\end{align*}
The value function $V(s_t, g)$ is defined to be:
\begin{align*}
    V(s_{t}, g) &= - \dfrac{ \sum_{\Delta=0} \gamma^{\Delta} p_{\pi}(s_{t+1+\Delta} = g \mid s_{t}) \cdot \Delta }{ \sum_{\Delta=0} \gamma^{\Delta} p_{\pi}(s_{t+1+\Delta} = g \mid s_{t}) } 
\end{align*}
Which can be expanded as:
\begin{align*}
    - \dfrac{ \gamma p(s_{t+2}=g \mid s_t) + 2\gamma^{2}p(s_{t+3}=g \mid s_t) + 3\gamma^{3}p(s_{t+4}=g \mid s_t) + \cdots } { p(s_{t+1}=g \mid s_t) + \gamma p(s_{t+2}=g \mid s_t) + \gamma^{2}p(s_{t+3}=g \mid s_t) + \gamma^{3}p(s_{t+4}=g \mid s_t) + \cdots  }
\end{align*}
On the other hand, $\E_{\pi(a\mid s_{t}, g)}[Q(s_{t}, a, g)]$ can be written as:
\begin{align*}
    - \E_{\pi(a\mid s_{t}, g)} \bigg[ \dfrac{ \gamma p(s_{t+2}=g \mid s_t, a) + 2\gamma^{2}p(s_{t+3}=g \mid s_t, a) + 3\gamma^{3}p(s_{t+4}=g \mid s_t, a) + \cdots } { p(s_{t+1}=g \mid s_t, a) + \gamma p(s_{t+2}=g \mid s_t, a) + \gamma^{2}p(s_{t+3}=g \mid s_t, a) + \gamma^{3}p(s_{t+4}=g \mid s_t, a) + \cdots } \bigg]
\end{align*}
by the law of total probability, 
\begin{align*}
    p(s_{t+1+\Delta} = g \mid s_{t}) = \E_{\pi(a\mid s_{t},g)}[p(s_{t+1+\Delta} = g \mid s_{t}, a_{t})]
\end{align*} 
Which means that $V(s_t, g) = \E_{\pi(a\mid s_{t},g)}[Q(s_{t}, a, g)]$ under the given assumption that $1/\E_{\pi(a\mid s_{t},g)}[\alpha_{t}^{(0)}] = \E_{\pi(a\mid s_{t})}[1 / \alpha_{t}^{(0)}]$:
\begin{align}
    \E_{\pi(a\mid s_{t}, g)} \bigg[ \dfrac{1}{ \sum_{\Delta=0} \gamma^{\Delta} p(s_{t+1+\Delta} = g \mid s_{t}, a ) } \bigg] = \dfrac{1}{ \sum_{\Delta=0} \gamma^{\Delta} p(s_{t+1+\Delta} = g \mid s_{t} )} \label{eq:assumption-wrt-policy}
\end{align}

Furthermore, $V^{\pi}(s_{t+1}, g)$ can be expanded as:
\begin{align*}
    - \dfrac{ \gamma p(s_{t+3} =g \mid s_{t+1}) + 2\gamma^{2}p(s_{t+4}=g \mid s_{t+1}) + 3\gamma^{3}p(s_{t+5}=g \mid s_{t+1}) + \cdots }{ p(s_{t+2}=g \mid s_{t+1}) + \gamma p(s_{t+3}=g \mid s_{t+1}) + \gamma^{2}p(s_{t+4}=g \mid s_{t+1}) + \gamma^{3}p(s_{t+5}=g \mid s_{t+1}) + \cdots  }
\end{align*}
Note that $-1 + V^{\pi}(s_{t+1}, g)$ becomes
\begin{align*}
    - \dfrac{ p(s_{t+2}=g \mid s_{t+1}) + 2\gamma p(s_{t+3}=g \mid s_{t+1}) + 3\gamma^{2}p(s_{t+4}=g \mid s_{t+1}) + 4\gamma^{3}p(s_{t+5}=g \mid s_{t+1}) \cdots } { p(s_{t+2}=g \mid s_{t+1}) + \gamma p(s_{t+3}=g \mid s_{t+1}) + \gamma^{2}p(s_{t+4}=g \mid s_{t+1}) + \gamma^{3}p(s_{t+5}=g \mid s_{t+1}) + \cdots  } 
\end{align*}
Multiply both the nominator and the denominator by $\gamma$:
\begin{align*}
    - \dfrac{ \gamma p(s_{t+2}=g \mid s_{t+1}) + 2\gamma^{2} p(s_{t+3}=g \mid s_{t+1}) + 3\gamma^{3}p(s_{t+4}=g \mid s_{t+1}) + \cdots }{ \gamma p(s_{t+2}=g \mid s_{t+1}) + \gamma^{2} p(s_{t+3}=g \mid s_{t+1}) + \gamma^{3}p(s_{t+4}=g \mid s_{t+1}) + \cdots }
\end{align*}
Note that, for $\Delta > 0$, by the law of total probability,
\begin{align*}
    p(s_{t+1+\Delta} = g \mid s_{t}, a_{t}) = \E_{p(s_{t+1} \mid s_{t}, a_{t})}[p(s_{t+1+\Delta} = g \mid s_{t+1})]
\end{align*}
As a result:
\begin{align*}
    \E_{p(s_{t+1} \mid s_{t}, a_{t})}[-1 + V^{\pi}(s_{t+1}, g)] &= - Z [\gamma p(s_{t+2}=g \mid s_t, a_t) + 2\gamma^{2} p(s_{t+3}=g \mid s_t, a_t) + 3\gamma^{3}p(s_{t+4}=g \mid s_t, a_t) + \cdots ]
\end{align*}
where 
\begin{align*}
    Z &= \E_{p(s_{t+1} \mid s_{t}, a_{t})} \bigg[ \dfrac{1}{ \sum_{\Delta=1} \gamma^{\Delta} p(s_{t+1+\Delta} = g \mid s_{t+1} ) } \bigg]
\end{align*}
Resulting in:
\begin{align*}
    \dfrac{ Q^{\pi}(s_t, a_t, g) }{\E_{p(s_{t+1} \mid s_{t}, a_{t})}[-1 + V^{\pi}(s_{t+1}, g)]} = \dfrac{ \dfrac{1}{ \E_{p(s_{t+1} \mid s_t, a_t)} \bigg[\dfrac{1}{\gamma p(s_{t+2}=g \mid s_{t+1}) + \gamma^{2} p(s_{t+3}=g \mid s_{t+1})+ \cdots} \bigg] } } { p(s_{t+1}=g \mid s_t, a_t) + \gamma p(s_{t+2}=g \mid s_t, a_t) + \gamma^{2}p(s_{t+3}=g \mid s_t, a_t)+ \cdots  } 
\end{align*}
Under the given assumption that $1/\E_{p(s_{t+1}\mid s_{t}, a_{t})}[\alpha_{t+1}^{(1)}] = \E_{p(s_{t+1}\mid s_{t}, a_{t})}[1 / \alpha_{t}^{(1)}]$: 
\begin{align}
    \E_{p(s_{t+1} \mid s_{t}, a_{t})} \bigg[ \dfrac{1}{ \sum_{\Delta=1} \gamma^{\Delta} p(s_{t+1+\Delta} = g \mid s_{t+1} ) } \bigg] = \dfrac{1}{ \sum_{\Delta=1} \gamma^{\Delta} p(s_{t+1+\Delta} = g \mid s_{t}, a_{t} )} \label{eq:assumption-wrt-dynamics}
\end{align}
we have:
\begin{align*}
    \dfrac{ Q^{\pi}(s_t, a_t, g) }{\E_{p(s_{t+1} \mid s_{t}, a_{t})}[-1 + V^{\pi}(s_{t+1}, g)]} &= \dfrac{ \gamma p(s_{t+2}=g \mid s_{t}, a_{t}) + \gamma^{2} p(s_{t+3}=g \mid s_{t}, a_{t})+ \cdots } { p(s_{t+1}=g \mid s_t, a_t) + \gamma p(s_{t+2}=g \mid s_t, a_t) + \gamma^{2}p(s_{t+3}=g \mid s_t, a_t)+ \cdots } \\
    &= \bigg( 1 - \dfrac{ p(s_{t+1}=g \mid s_t, a_t) } { p(s_{t+1}=g \mid s_t, a_t) + \sum_{\Delta=1} \gamma^{\Delta} p(s_{t+1+\Delta} = g \mid s_{t}, a_{t}) }  \bigg) \\
    &= \bigg(1 - \dfrac{\beta_{t}}{\beta_{t} + \alpha_{t}^{(1)}} \bigg) 
\end{align*}
Which eventually leads to:
\begin{align*}
    Q^{\pi}(s_t, a_t, g) &= \E_{p(s_{t+1} \mid s_{t}, a_{t})} \bigg[\Big(1 - \dfrac{\beta_{t}}{\beta_{t} + \alpha_{t}^{(1)}} \Big) \Big(-1 + \E_{\pi(s_{t+1},g)}[Q^{\pi}(s_{t+1}, \cdot, g)] \Big)\bigg]
\end{align*}
\end{proof}


\subsection{Deriving HER Rewards}
\label{her-reward}

\begin{lemma}[Understanding Hindsight Experience Replay]
Multi-goal $Q$-learning with HER reward \eqref{eq:her-reward} with $\{-1, 0\}$ is a special case of minimizing the following objective:
\[
    \E_{ \substack{\rho_{\mu}(s, a) \\ p(s'\mid s,a) \\ p^{+}_{\mu} (s^{+} \mid s, a) }} [f^{*}(-(Q- \gamma \mathcal{P}^{\pi}Q)(s,a,s^{+})) - \beta Q(s,a,s')]
\]
with $\beta=(1-\gamma)$ and the convex function $f^{*}$ chosen to be:
\[
f^{*}(x) = (x-1)^{2} / 2 + 3/2
\]
\end{lemma}

\begin{proof}[Proof of Lemma \ref{lemma:her-as-a-special-case}]
Recall that we have defined $p^{+}_{\mu} (s^{+} \mid s, a)$ in \eqref{eq:future-state-distribution}:
\begin{align*}
    p^{+}_{\mu} (s^{+} \mid s, a) &= (1-\gamma) p(s^{+} \mid s, a) + \gamma \int_{S\times A} p(s' \mid s, a) \mu(a' \mid s') p^{+}_{\mu} (s^{+} \mid s', a') \mathrm{d} s' \mathrm{d} a'
\end{align*}
And that we have defined a quadratic form of $f^{*}$ (with $\overline{c}$ being constants):
\begin{align*}
    f^{*}(x) &= (x-1)^{2} / 2 + \overline{c}
\end{align*}

Using the dynamics to expand the expectation and applying the choice of $f^{*}$ being a quadratic, the loss becomes:
\begin{align*}
     \argmin_{Q} & \E_{ \substack{\rho_{\mu}(s, a) p(s' \mid s, a)} } (1-\gamma) \cdot \Big[  \dfrac{1}{2}\Big(-1 + (\gamma \mathcal{P}^{\pi}Q - Q_{\theta})(s,a,s')\Big)^{2} - \cdot Q_{\theta}(s,a,s') \Big] \label{eq:her-derive-immediate-next-state} \\
     & + \E_{ \substack{\rho_{\mu}(s, a) p(s' \mid s, a) \mu(a'\mid s') p_{\mu}^{+}(s^{+}\mid s', a') } } \Big[ \gamma \cdot  \dfrac{1}{2}\Big(-1 + (\gamma \mathcal{P}^{\pi}Q - Q_{\theta})(s,a,s^{+})\Big)^{2} \Big]
\end{align*}
\textit{Assuming that there is a stop gradient sign on $\mathcal{P}^{\pi}Q$ because of the use of a target network} \citep{dqn, ddpg, sac}, we can rewrite the above as one single quadratic and see that the gradient of the above loss w.r.t $Q$ is equivalent to the gradient of the following squared Bellman residual:
\begin{align*}
    \argmin_{Q} \E_{ \substack{\rho_{\mu}(s, a) p(s' \mid s,a) p_{\mu}^{+}(s^{+} \mid s, a)} } \Big[ \dfrac{1}{2}\Big(r(s,a,s',s^{+}) + (\gamma \mathcal{P}^{\pi}Q - Q_{\theta})(s,a,s^{+})\Big)^{2} \Big]
\end{align*}
where the reward function $r(s,a,s',s^{+})$ is:
\begin{align*}
    \begin{cases}
      0 & s'=s^{+} \\
     -1 & s'\neq s^{+}
    \end{cases} 
\end{align*}
The constant $3/2$ in $f^{*}$ is chosen to ensure that $((f^{*})^{*})(1) = 0$ in the definition of $f$-divergence \eqref{eq:f-divergence}, but it does not affect the optimization process. 


\end{proof}

\section{Experimental Details}

For the reward design experiments, we use the following hyper-parameters in Table \ref{tab:reward-hparams}, which are mostly the same from prior open-sourced implementations \citep{her, mega, world-model-graph}. For discrete action space experiments, we use the following thresholds (of Euclidean norms) for determining success: $[0.08, 0.08, 0.05, 0.05, 0.1]$, which are tight thresholds based on our visualizations of the environments (those are tighter thresholds than the original ones in \citep{gcsl}; based on our observations, the original thresholds are often too loose). We use the same network architecture, sampling and optimization schedules for all the methods, as described in Table \ref{tab:discrete-actions-hyper-parameters}. As for $\gamma_{\text{HDM}}$, we set it to be $0.85$ in \textit{Four Rooms} and \textit{Lunar Lander}, $0.5$ in \textit{Sawyer Push} and \textit{Claw Manipulate}, and $0.4$ for \textit{Door Opening}. Ablation on this hyper-parameter can be found in Figure \ref{fig:ablation-studies}.

\begin{table}[H]
\renewcommand{\arraystretch}{1.1}
\centering
\caption{Hyper-parameters for the goal reaching reward design experiments}
\label{tab:reward-hparams}
\vspace{1mm}
\begin{tabular}{l l| l }
\toprule
\multicolumn{2}{l|}{Parameter} &  Value\\
\midrule
\multicolumn{2}{l|}{\it{DDPG} \citep{ddpg}}& \\
& optimizer & Adam \citep{adam} \\
& architecture & MLP + BVN \citep{bvn} \\
& number of hidden layers (all networks) & 2 \\
& number of hidden units per layer & 256\\
& nonlinearity & ReLU\\
& Normalize per-dimension obs \citep{ppo} & yes \\
& polyak for target network ($\tau$)& 0.995\\
& target network update interval & 10\\
& Use target network for policy \citep{td3} & yes \\
& ratio between environment vs optimization steps & 2\\
& Random action probability & 0.2 \\ 
& Initial random trajectories per worker & 100 \\
& Hindsight relabelling ratio & 0.85 \\
& Learning rate & 0.001 \\
& Batch size & 1024 \\
& Gamma factor $\gamma$ & 0.99 \\
& Action $L2$ regularization & 0.01 \\
& Gaussian noise scale & 0.1 \\
& Number of parallel workers & 12 \\
& Replay buffer size & 2500000 \\ 
\bottomrule
\end{tabular}
\end{table}

\begin{table}[H]
\renewcommand{\arraystretch}{1.1}
\centering
\caption{Hyper-parameters for discrete action space experiments}
\label{tab:discrete-actions-hyper-parameters}
\vspace{1mm}
\begin{tabular}{l l| l }
\toprule
\multicolumn{2}{l|}{Parameters} &  Value\\
\midrule
\multicolumn{2}{l|}{\it{DDQN} \citep{double-dqn}}& \\
& Optimizer & Adam \citep{adam} \\
& Number of hidden layers (all networks) & 2 \\
& Number of hidden units per layer & [400, 300] \citep{td3} \\
& Non-linearity & ReLU\\
& Polyak for target network & 0.995\\
& Target update interval & 10\\
& Ratio between env vs optimization steps & 1\\
& Initial random trajectories & 200 \\
& Hindsight relabelling ratio & 0.85 \\
& Update every \# of steps in environment & 50 \\
& Next state relabelling ratio & 0.2 \\
& Learning rate & 5.e-4 \\
& BC loss weight & 1.0 \\
& Gamma factor $\gamma$ & 0.98 \\
& Logit temperature for SQL \citep{equivalence-pg-sql} & 0.2 \\
& Batch size & 256 \\
& Epsilon greedy \citep{dqn} & 0.2 \\
& Replay buffer size & 2500000 \\
\bottomrule
\end{tabular}
\end{table}

\begin{figure}[h]
    \centering
    \includegraphics[width=0.7\textwidth]{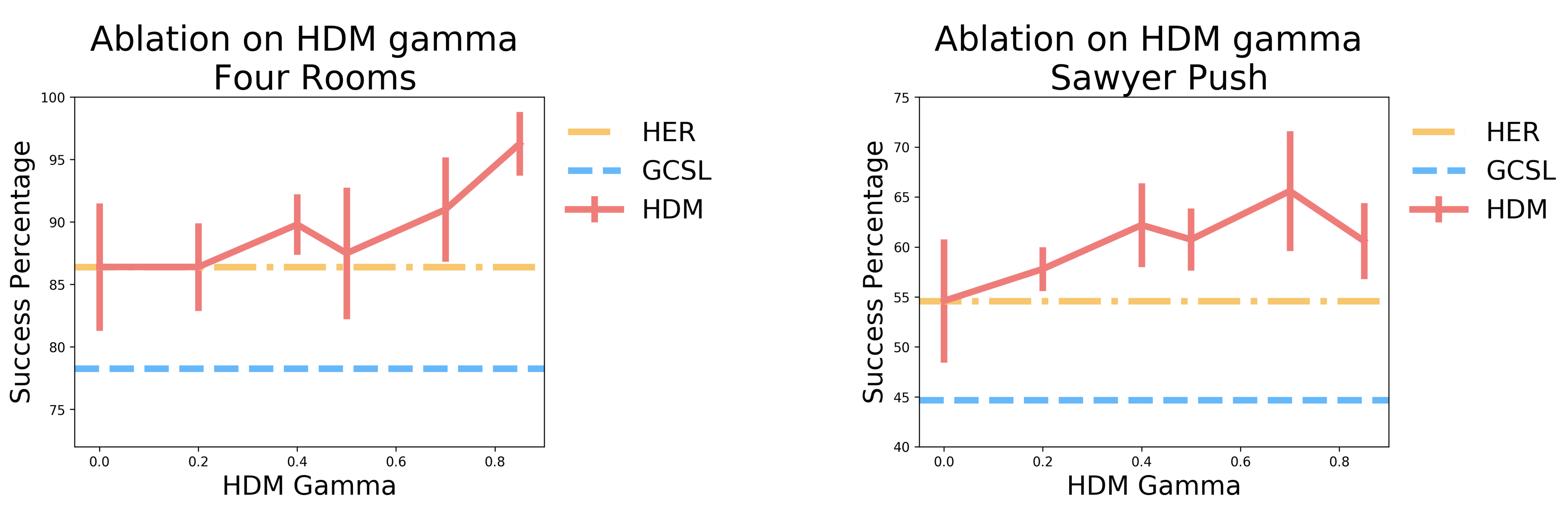}
    \caption{Ablation studies on HDM Gamma $\gamma_{\text{hdm}}$ (see equation \eqref{eq:hdm-final-loss}). The orange line and the blue line denote HER \citep{her} and GCSL \citep{gcsl} baseline performance. Intuitively, $\gamma_{\text{hdm}}$ controls the threshold for when an action is considered good enough for imitation. As we lower $\gamma_{\text{hdm}}$, the threshold $-\log \gamma_{\text{hdm}}$ gets higher and fewer actions get imitated, with the remaining imitated actions more concentrated around the goal, where hindsight-relabeled actions are more likely to be optimal. The ablation shows that HDM outperforms HER and GCSL across a variety of $\gamma_{\text{hdm}}$, while recovering the performance of HER when its value is close to zero. Both environments have discrete action space where we use a softmax (Boltzmann) policy \citep{equivalence-pg-sql} with the softmax logits being $Q$-values.}
    \label{fig:ablation-studies}
\end{figure}

\section{Further Discussions on Achieved Goal (\emph{ag}) Change Ratio}

We define the \textit{ag change ratio} of $\pi$ to be: the percentage of trajectories where the achieved goals in initial states {$s_{0}$} are different from the achieved goals in final states {$s_{T}$} under $\pi$. Using notation from \eqref{eq:her-reward}, it can be computed as $\E_{s_{0} \cdots s_{T} \sim \pi}[- r_{\text{HER}}(\cdot, \cdot, s_{0}, s_{T})]$.

We then define \textit{initial ag change ratio} to be the ag change ratio of a random-acting policy $\pi_{0}$. Using notation from \eqref{eq:her-reward}, it can be computed as $\E_{s_{0} \cdots s_{T} \sim \pi_{0}}[- r_{\text{HER}}(\cdot, \cdot, s_{0}, s_{T})]$.

In our experiments, we have made the following observations:
\begin{itemize}
    \item \textit{ag change ratio} is strongly indicative of learning progress on many environments, as shown in Figure \ref{fig:ag-change-ratio-curve}.
    \item \textit{initial ag change ratio}, which can be computed without training any policies, seems to be correlated to the final performance of HBC / GCSL, as shown in Figure \ref{fig:analysis-linear-fit-hbc} (bubble sizes are based on the variances of the success rate).
\end{itemize}

\begin{wrapfigure}{r}{0.45\textwidth}
\captionsetup{justification=centering}
\begin{minipage}{0.45\textwidth}
\includegraphics[width=0.95\textwidth]{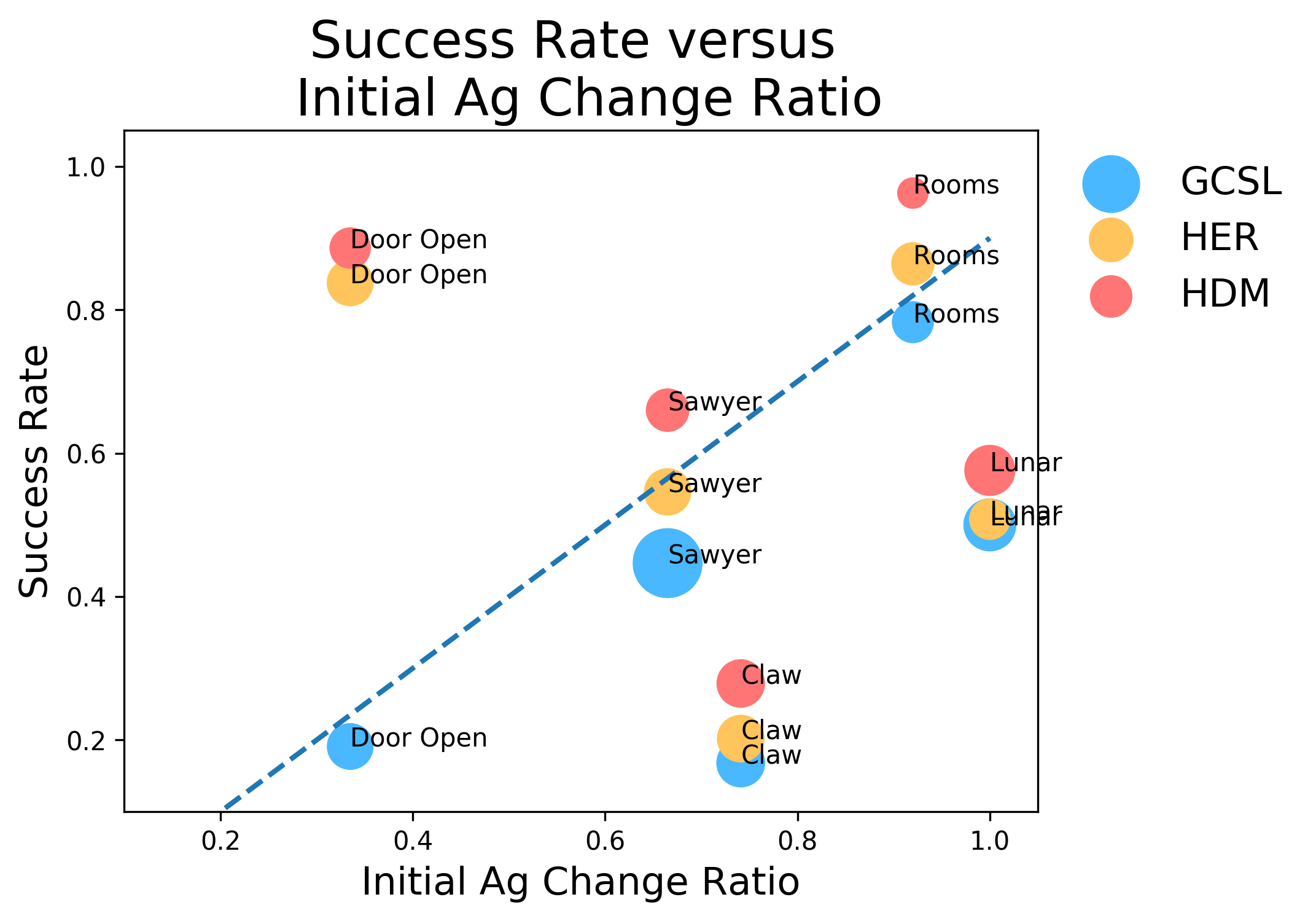}
\caption{\footnotesize Success rate versus \textit{initial ag} change ratio.}
 \label{fig:analysis-linear-fit-hbc}
 \end{minipage}
 \vspace{-1mm}
\end{wrapfigure}

We suspect that the reasons are the following:
\begin{itemize}
    \item Most training signals are only created from \textit{ag} changes, because they provide examples of how to \textit{rearrange} an environment. As the policy learns to rearrange the environment with higher frequency, hindsight relabeling ensures that the learning progress naturally accelerates. 
    \item One likely reason why the performance of GCSL seems to be upper-bounded by a linear relationship between the final success rate and the initial ag change ratio is that: a BC-style objective starts off cloning the initially random trajectories, so if \textit{initial ag change ratio} is low, the policy would not learn to rearrange \textit{ag} from self-imitation, compounding to a low final performance. HER and HDM are able to surpass this upper ceiling likely because they try to reach goals with fewer steps besides imitation.
    \item This finding suggests that in order to make goal-reaching \textit{easier}, we should either modify the initial state distribution $\rho^{0}(s)$ such that \textit{ag} can be easily changed through random exploration \citep{reverse-curriculum} (if the policy is training from scratch), or initialize BC from some high-quality demonstrations where $ag$ does change \citep{hbc, overcome-explore, latent-plan-from-play}.
\end{itemize}

The above insights on the possible relationship between \textit{ag change ratio} and exploration difficulty makes us wonder whether the striking results about reward design effectiveness in Section \ref{sec:reward-design-results} are independent of the exploration problem. Indeed, Figure \ref{fig:ag-change-ratio-curve} shows that HER with $\{-1, 0\}$ rewards takes off faster partially because it manages to increase its \textit{ag change ratio} faster. To show that \textbf{our conclusions about goal-reaching reward design are independent of exploration difficulties}, we benchmark the reward-design results on an environment where all methods have an \textit{initial ag change ratio} of $1$ (and where the \textit{ag change ratio} stays at around $1$ throughout learning for all methods), namely the \textit{Shallow Hand} environment \citep{multi-goal-envs} \texttt{HandManipulateBlockRotateZ}. We use the same set of hyper-parameters as Table \ref{tab:reward-hparams} except for the fact that we use $20$ parallel workers for the \textit{Hand} environments rather than $12$, since this environment is harder. The results are presented in Figure \ref{fig:hand-env-results}, showing that HER with $\{-1, 0\}$ rewards is the only method that learns and succeeds. 

These findings are largely consistent with the conjecture hypothesized in the original Hindsight Experience Replay (HER) paper, which discussed a similar problem in the context of a simple bit-flipping experiment (See \cite{her} Section 3.1).

\begin{figure}
    \centering
    \includegraphics[width=0.95\textwidth]{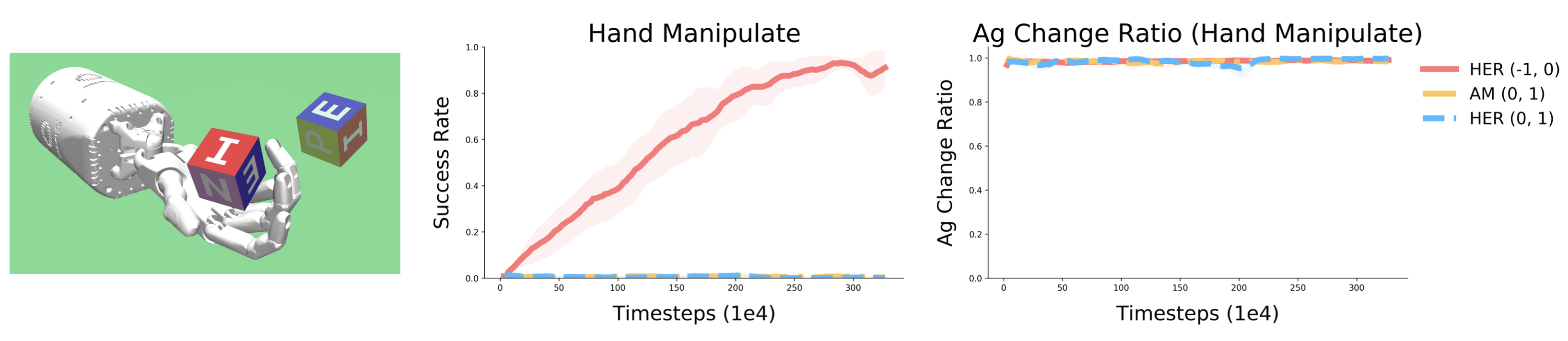}
    \caption{Even on an environment where the \textit{initial ag change ratio} is $1$ and where the \textit{ag change ratio} stays at around $1$ throughout learning for all methods, Actionable Models (AM) \citep{actionablemodels} and HER with $\{0, 1\}$ rewards still fail to learn anything, where HER with $\{-1, 0\}$ rewards \citep{her} succeed at learning a goal-conditioned policy. All hyper-parameters are the same except for the reward design and bellman backups for all three methods. Results are averaged over 5 random seeds, and the variances across seeds are plotted. This shows that our conclusions in Section \ref{sec:reward-design-results} are independent of exploration difficulties, and that the design choices for goal-reaching rewards matter significantly.}
    \label{fig:hand-env-results}
\end{figure}

\end{document}